\theoremstyle{plain}
\newtheorem{theorem}{Theorem}[section]
\newtheorem{lemma}[theorem]{Lemma}
\theoremstyle{definition}
\theoremstyle{remark}
\newcommand{\norm}[1]{\left\lVert#1\right\rVert}
\newcommand{\MUCMM}{\texttt{MUC2M}}
\newcommand{\baseresnet}{\texttt{Base ResNet50}}
\newcommand{\baseclip}{\texttt{Base CLIP-ResNet50}}
\newcommand{\uni}{\texttt{Unif.}}
\newtheorem{prop}[theorem]{Proposition}
\theoremstyle{definition}
\newcommand{\FL}{\texttt{FL}}
\newcommand{\Conv}{\texttt{Conv}}
\newcommand{\ReLU}{\texttt{ReLU}}
\newcommand{\batchtwod}{\texttt{BatchNorm2d}}
\newcommand{\batchoned}{\texttt{BatchNorm1d}}
\ificcvfinal\pagestyle{empty}\fi
\begin{document}

\title{Multimodal Understanding \\Through Correlation Maximization and Minimization}

\author{Yifeng Shi\\
UNC-Chapel Hill\\
{\tt\small yifengs@cs.unc.edu}
\and
Marc Niethammer\\
UNC-Chapel Hill\\
{\tt\small mn@cs.unc.edu}
}

\maketitle
\ificcvfinal\thispagestyle{empty}\fi

\begin{abstract}
Multimodal learning has mainly focused on learning large models on, and fusing feature representations from, different modalities for better performances on downstream tasks. In this work, we take a detour from this trend and study the intrinsic nature of multimodal data by asking the following questions: 1) Can we learn more structured latent representations of general multimodal data?; and 2) can we intuitively understand, both mathematically and visually, what the latent representations capture? To answer 1), we propose a general and lightweight framework, \textbf{M}ultimodal \textbf{U}nderstanding Through \textbf{C}orrelation \textbf{M}aximization and \textbf{M}inimization (\MUCMM), that can be incorporated into any large pre-trained network. \MUCMM\ learns both the common and individual representations. The common representations capture what is common between the modalities; the individual representations capture the unique aspect of the modalities. To answer 2), we propose novel scores that summarize the learned common and individual structures and visualize the score gradients with respect to the input, visually discerning what the different representations capture. We further provide mathematical intuitions of the computed gradients in a linear setting, and demonstrate the effectiveness of our approach through a variety of experiments. 
\end{abstract}

\section{Introduction}
With rapidly increasing varieties of data being collected, multimodal learning has gained prominence~\cite{Baltruaitis2017MultimodalML,Bayoudh2021ASO,Xu2022MultimodalLW}. Much of the recent work in multimodal learning focuses on the fusion of information from different modalities for a diverse set of downstream tasks, examples of which range from autonomous driving~\cite{Xiao2019MultimodalEA}, to image captioning \cite{Stefanini2021FromST}, and to recent work on image generation~\cite{Dhariwal2021DiffusionMB}. In this work, however, we focus on understanding the intrinsic nature of multimodal data beyond a particular downstream task. 

We are motivated by the question how can we understand the intrinsic nature of multimodal data without supervision? For instance, given a pair of image and text, can we highlight the object in the image that is described by the text in an unsupervised manner? Or given a pair of images, can we learn what objects the images have in common, and what objects are unique to each image without supervision? Indeed, one way to think about multimodal data is to view each modality as a combination of the common structure, i.e. the information it shares with other modalities, and the individual structure, i.e. the information that is unique to that modality. In this work, we aim to learn the common and individual structures of multimodal data in an unsupervised manner.


Two characteristics should be present to corroborate a model's ability to distinguish between the common and individual structures: 1) the latent representations learned for the two structures should be uncorrelated with each other; and 2) we can visualize the learned structures to demystify the otherwise opaque latent representations. While large pre-trained models for multimodal data exist~\cite{Bugliarello2020MultimodalPU,Bayoudh2021ASO,Xu2022MultimodalLW,Long2022VisionandLanguagePM}, their extracted feature representations do not provide a clear separation between the common and individual aspects of the respective modalities. We nevertheless want to build on such models as they are powerful feature extractors trained on large datasets, whereas training them from scratch would be computationally costly and not possible if only limited data is available~\cite{Brigato2020ACL,Zhuang2019ACS}. To this end, to learn latent representations for the common structure, we introduce a reformulated deep canonical correlation analysis (deep CCA)~\cite{Andrew2013DeepCC} that is easier to optimize; to learn that for the individual structure, we introduce constraints to ensure the representations capture different information as opposed to the common latent representations. Furthermore, we introduce contractive regularization in the style of Rifai \etal~\cite{Rifai2011ContractiveAE} to prevent erroneous learning and to improve learning efficiency. Altogether, we propose a unified, lightweight framework that can utilize any appropriate large pre-trained networks to learn latent representations for the common and individual structures.

To visualize the learned structures, the gradient map with respect to the input data is a good candidate. Such approaches have been extensively studied in supervised settings where one computes the gradient map of the model output (e.g. a classification output) with respect to the input for visualization. However, since our setting is unsupervised, we first have to define an appropriate score function with respect to which a gradient map can be computed. To this end, we introduce sensible scores of the common and individual structures based on their specific latent representations. Last but not least, we also provide theoretical insights into our proposed score functions in a linear setting.  
\vspace{-4.5mm}
\paragraph{Main contributions} Our main contributions are: 1) We introduce \MUCMM, a general lightweight framework that can be used in conjunction with existing large pre-trained networks to learn and visualize the common and individual structures of multimodal data; 2) We introduce a reformulated deep CCA objective that is easier to optimize; 3) We incorporate the contractive regularization into \MUCMM\ to prevent erroneous learning; 4) We introduce novel scores that summarize both structures for gradient computations; 5) We provide mathematical intuitions of \MUCMM\ in a linear setting.

\section{Related Work}
\paragraph{CCA and its application in self-supervised learning} CCA \cite{Hotelling1936RelationsBT,Knapp1978CanonicalCA} aims to find the relation between two sets of variables. Nonlinear extensions of CCA, specifically the deep CCA \cite{Andrew2013DeepCC,Wang2016OnDM}, were introduced to learn nonlinear relations and handle more complicated data modalities such as images and audios. The idea of seeking the common structure between different modalities is a general one, and turns out to be especially applicable in self-supervised learning. For example, Zbontar \etal \cite{Zbontar2021BarlowTS} and Bardes \etal \cite{Bardes2021VICRegVR} proposed to learn features that are robust to data augmentations by enforcing CCA-related losses between two different augmentations of the same input data. Zhang \etal \cite{Zhang2021FromCC} applied similar ideas to graph learning. Balestriero \etal \cite{Balestriero2022ContrastiveAN} provided an overview and theoretical analysis of the relations between many self-supervised methods and spectral embedding approaches. We rewrite the deep CCA objective proposed in Chang \etal \cite{Chang2017ScalableAE} to a different, but equivalent, formation that helps the model better avoid local maxima during optimization.
\vspace{-3mm}
\paragraph{Common and individual structures in multimodal data} Works that focus on finding the common and individual structures in multimodal data exist. Lock \etal \cite{Lock2011JOINTAI} and Luo \etal \cite{Luo2018ConsistentAS} assumed a linear structural decomposition and proposed an iterative algorithm to find such decomposition. Feng \etal \cite{Feng2017AnglebasedJA} used perturbation theory to find sets of bases for the subspaces that span the common and individual structures, and linearly project the data onto the subspaces. However, those approaches can only model linear relations (via either assuming linear structural decompositions or using linear projections), and cannot be easily scaled to more challenging data types such as images. Xu \etal \cite{Xu2021MultiVAELD} proposed to learn ``view-common'' and ``view-peculiar'' representations for multi-view clustering using a variational autoencoder (VAE) \cite{Kingma2013AutoEncodingVB}. Abid \cite{Abid2019ContrastiveVA} proposed a similar VAE-based model to learn salient features that are specific to one view but not the other. Since one of our main motivations is to make \MUCMM\  a lightweight model that can be incorporated with any pre-trained network (a vast majority of which are not probabilistic), we design \MUCMM\  in a non-probabilistic manner. Concurrent with our work, Sun \etal \cite{Sun2022AddressingCB} proposed a similar framework to address the contradiction between correlation maximization and the reconstruction objective in DCCAE \cite{Wang2016OnDM}. Different from \MUCMM\ , they used the original CCA objective to learn the common representations whereas we propose an equivalent, but advantageous, formulation (see Sec.~\ref{barlow_advantage}). They also learned the individual representations by maximizing the Jenson-Shannon Divergence between the individual and common representations whereas we propose to decorrelate the two representations. The choice of decorrelation lends us a more natural interpretation of the gradient maps we compute where we connect them to linear projections in the linear setting. Last but not least, as one of our main contributions, we propose novel scores that are motivated theoretically from the linear setting (see Sec.~\ref{linear_case_proof}) for visualizing the learned structures in the input space.

\begin{figure*}[t]
  \includegraphics[width=\textwidth,height=6cm]{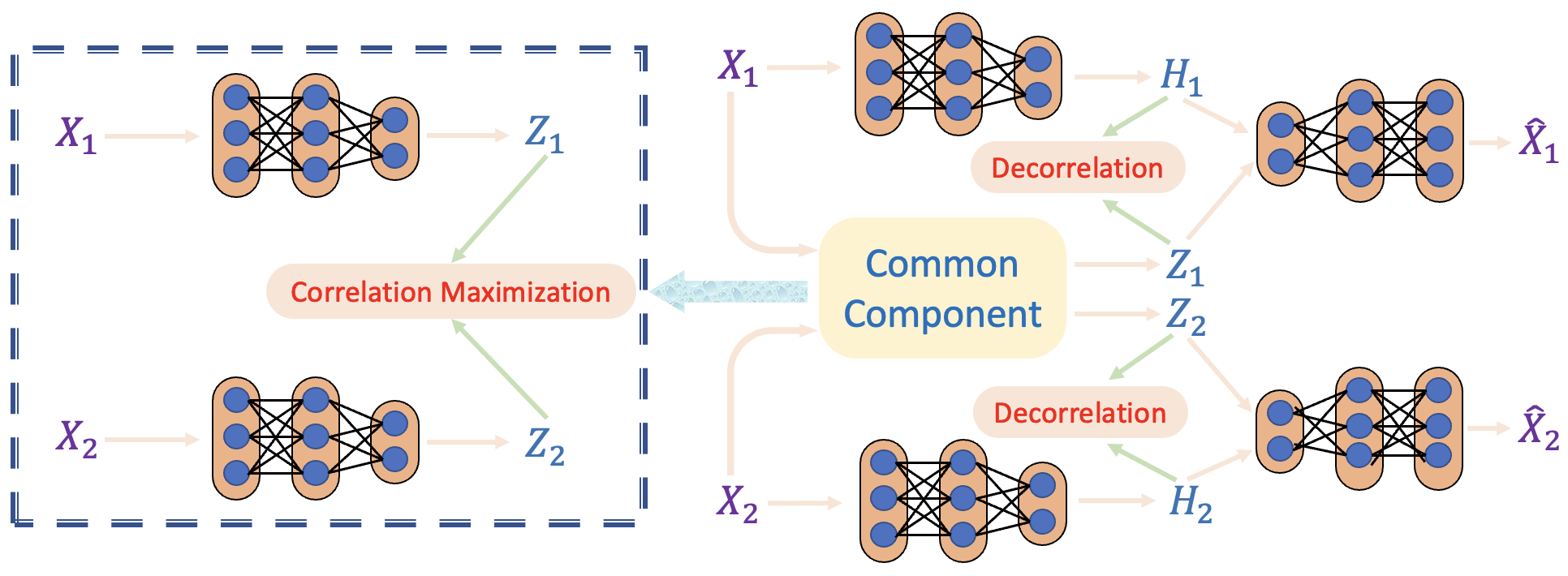}
\caption{An overview of \MUCMM. The inputs, $X_1$ and $X_2$, are extracted features from pre-trained networks. The training is done in two stages: 1) we first learn the common component (shown on the left) by optimizing the loss in Eq~\ref{loss_common} and output the common latent representations $Z_1$ and $Z_2$; and 2) we then freeze the common component and use it for learning the individual component (shown on the right) by optimizing the loss in Eq.~\ref{loss_indi}, and output the individual latent representations $H_1$ and $H_2$. Last but not least, we reconstruct $X_1$ and $X_2$ using the (concatenation of) respective common and individual representations.}
\label{model_diagram}
\vspace{-4mm}
\end{figure*}
\vspace{-0.5mm}
\section{Approach}
\label{model}
\paragraph{Notation} We use upper case letters to denote matrices and higher-order tensors (e.g. batch of matrices or images). For a subscripted matrix $X_i$, we use $x_{i,j}$ (column form) to denote the $j$-th \emph{row} of $X_i$, and $x^{i,j}$ (column form) to denote the $j$-th \emph{column} of $X_i$. For a generic modality (e.g. text) $M_i$, $m_{i,j}$ denotes the $j$-th item of $M_i$. We use bold numbers to denote vectors, matrices, or tensors of constant elements (e.g. $\textbf{0}_{p\times q}$ denotes a $p\times q$ matrix of zeros, \textbf{I} denotes the identity matrix). 

 For concreteness, we assume that we have two modalities of $n$ pairs of data samples. The first modality, $M_1$, consists of images and the second, $M_2$, consists of text. We also assume we have two pre-trained networks, $\psi_1$ and $\psi_2$, for extracting feature representations of $M_1$ and $M_2$, respectively. Although possible, we do \emph{not} fine-tune the pre-trained networks in this work. Therefore, for our purpose, the inputs to our model are the extracted feature representations, $X_1 = \psi_1(M_1)$ and $X_2 = \psi_2(M_2)$ where $X_1 \in R^{n\times d_1}$ and $X_2 \in R^{n\times d_2}$ with $d_1$ and $d_2$ denoting the dimensions of the extracted features. We next introduce \MUCMM\ in terms of its two components: the common component for capturing the common structure, and the individual component for capturing the individual structure.
 
\subsection{Common Component}
\label{common_struc}
\paragraph{Common latent representations} We introduce a reformulated deep CCA to learn the latent representations for the common structure between two modalities. Deep CCA learns a separate function, $f_{\theta_i}: X_i \rightarrow Z_i$ for $i\in \{1,2\}$ where $Z_i\in R^{n\times k}$ and $k$ denotes the dimension of the common latent representations, to project each modality onto a latent space where the modalities are maximally correlated. We denote $\hat{Z}_i$ as the column-whitened (i.e. each column of $\hat{Z}_i$ has mean of 0 and variance of 1) $Z_i$ for $i\in \{1,2\}$. There are different, but equivalent, formulations of the deep CCA objective but the most relevant one is the following~\cite{Chang2017ScalableAE,Wang2016OnDM}
\begin{equation}\label{dcca}
\begin{aligned}
\max_{\theta_1,\theta_2} \quad & \frac{1}{n}\text{Tr}\left(\hat{Z}_1^T\hat{Z}_2\right)\\
\textrm{s.t.} \quad &  \frac{1}{n}\hat{Z}_1^T\hat{Z}_1 = \frac{1}{n}\hat{Z}_2^T\hat{Z}_2 = \textbf{I}\\
\end{aligned}
\end{equation}
where the mapping functions $f_{\theta_1}$ and $f_{\theta_2}$ are parameterized by neural networks. In words, the objective in Eq.~\ref{dcca} seeks pairs of common latent representations of the original modalities that are maximally correlated whereas the orthogonality constraints ensure the representational efficiency of the learned common latent space and avoid trivial solutions. 

Denote $\mathcal{C} = \hat{Z}_1^T\hat{Z}_2/n$ as the sample correlation matrix. We propose to combine the objective from Zbontar \etal \cite{Zbontar2021BarlowTS} with the deep CCA constraints and obtain the following alternative formulation
\begin{equation}\label{barlow_dcca}
\begin{aligned}
\min_{\theta_1,\theta_2} \quad & \sum_i \left(1-\mathcal{C}_{ii}\right)^2\\
\textrm{s.t.} \quad &  \text{same constraints as in (1)}\\
\end{aligned}
\end{equation}
where $\mathcal{C}_{ii}$ denotes the $i$-th diagonal element of $\mathcal{C}$. We note that the objectives in Eq.~\ref{dcca} and Eq.~\ref{barlow_dcca} are mathematically equivalent, as the correlation between two whitened representations is bounded within $[-1,1]$; thus maximizing the sum of the correlations between pairs of whitened representations is equivalent to minimizing the sum of the differences between the correlations and their upper bound (which is 1). Despite the conceptual equivalence, the formulation in Eq.~\ref{barlow_dcca} helps the model better avoid local maxima during training by offering the model an explicit goal to reach (as opposed to simply maximizing the sum of the correlations without informing the model what the upper limit is). We demonstrate the advantage of this formulation in Sec.~\ref{barlow_advantage}. Denote $\hat{\Sigma}_1 = \hat{Z}_1^T\hat{Z}_1/n$ and $\hat{\Sigma}_2 = \hat{Z}_2^T\hat{Z}_2/n$, we form the Lagrangian of the optimization problem detailed in Eq.~\ref{barlow_dcca} and use it as the overall loss for learning the common component of \MUCMM
\begin{equation}
    \mathcal{L}_{\text{comm}} = \sum_i \left(1-\mathcal{C}_{ii}\right)^2 + \lambda_1 \sum_{i\neq j} \left(\hat{\Sigma}_1\right)_{i,j}^2  + \lambda_2 \sum_{i\neq j} \left(\hat{\Sigma}_2\right)_{i,j}^2
\label{loss_common}
\end{equation}
where $\left(\hat{\Sigma}_i\right)_{i,j}$ denotes the $(i,j)$-th entry of $\hat{\Sigma}_i$ for $i\in \{1,2\}$.

\paragraph{Gradient maps} For an input sample pair $\{m_{1,i},m_{2,i}\}$, we described how we learn the common latent representations pair $\{z_{1,i},z_{2,i}\}$ that represents the shared common structure. Nevertheless, those latent representations are abstract and difficult to appreciate. To understand what they capture more intuitively, it is necessary to visualize the learned common structure in the original data space. To this end, we introduce the following score to summarize the common structure
\begin{equation}\label{common_score}
s_i = \left(\Tilde{z}_{1,i}^T\Tilde{z}_{2,i}\right) \cdot \left({z}_{1,i}^T{z}_{2,i}\right)
\end{equation}
where $\Tilde{z}_{1,i}$ and $\Tilde{z}_{2,i}$ denote the $l_2$-normalized ${z}_{1,i}$ and ${z}_{2,i}$, respectively. We can then use $s_i$ to compute the gradient maps $\partial s_i/\partial m_{1,i}$ and $\partial s_i/\partial m_{2,i}$ (i.e. we backpropagate through both the common component and the pre-trained feature extractor). Intuitively, $s_i$ measures how similar the common representations of the $i$-th pair of input samples are in the common latent space. In turn, taking the first modality as an example, $\partial s_i/\partial m_{1,i}$ encapsulates which part of $m_{1,i}$ contributes the most to the learned common structure (with respect to the other view). We formalize this intuition in the case of linear $f_{\theta_i},\ i\in \{1,2\}$ in Sec.~\ref{joint_linear_case_proof}. We also note that in practice, one can use more sophisticated tools like GradCAM \cite{Adebayo2018SanityCF} in lieu of the raw gradients whenever appropriate, which we expound on in Sec.~\ref{experiments}. 

\subsection{Individual Component}
\label{indi_struc}
\paragraph{Individual latent representations} In order to capture the individual structures that are specific to the modalities, we learn $g_{\kappa_i}: X_i \rightarrow H_i$ for $i\in \{1,2\}$ where $H_i\in R^{n\times q}$ and $q$ denotes the dimension of the individual latent space that represents the individual structure for each view. We again denote $\hat{H}_i$ as the column-whitened $H_i$. To ensure that the learned individual latent space is uncorrelated with the common latent space, we enforce the sample correlations between the individual and common latent representations to be zeros for each view. However, this constraint alone would not guarantee a meaningful $H_i$ as there can be multiple, potentially infinitely many, $H_i$ that satisfy this constraint. To ensure that $H_i$ is faithful to the original data, we learn $H_i$ in such a way that using both the common and individual representations (i.e. $Z_i$ and $H_i$) would lead to a good reconstruction of the input $X_i$. We therefore arrive at the following optimization problem (for $i\in \{1,2\})$
\begin{equation}\label{indi_opti}
\begin{aligned}
\min_{\xi_i,\kappa_i} \quad & \sum_{j=1}^n ||d_{\xi_i}\left([{z}_{i,j};{h}_{i,j}]\right)-x_{i,j}||_2^2\\
\textrm{s.t.} \quad &  \frac{1}{n}\hat{Z}_i^T\hat{H}_i = \mathbf{0}_{k\times q}\\
\end{aligned}
\end{equation}
where $d_{\xi_i}:[{z}_{i,\cdot};{h}_{i,\cdot}] \rightarrow x_{i,\cdot}$ is the decoding function that reconstructs each sample in the $i$-th view (i.e. $x_{i,\cdot}$) using the concatenation of the common and individual representations for that sample. This way the model is incentivized to learn ${H}_i$ that contains useful information about $X_i$ while being uncorrelated with the already-learned $Z_i$. Denote $\hat{\Delta}_i =  \hat{Z}_i^T\hat{H}_i/n$, the overall loss for learning the individual structure for each $i\in \{1,2\}$ is 
\begin{equation}
    \mathcal{L}^i_{\text{indi}} = \sum_{j=1}^n ||d_{\xi_i}\left([{z}_{i,j};{h}_{i,j}]\right)-x_{i,j}||_2^2  + \nu_i  \sum_{k,l} \left(\hat{\Delta}_i\right)_{k,l}^2
\label{loss_indi}
\end{equation}
where $\left(\hat{\Delta}_i\right)_{k,l}$ denotes the $(k,l)$-th entry of $\hat{\Delta}_i$ for $i\in \{1,2\}$.

\paragraph{Gradient maps} For a pair of input samples $m_{1,j}$ and $m_{2,j}$ where $1\leq j\leq n$, it might seem straightforward to define the individual scores as the $l_2$ norm of the individual representations. However, such a score cannot generalize to unseen data at \emph{test} time, as the concept of individuality is ill-posed when not considering commonality. To this end, we further project the individual representations onto the subspace orthogonal to the common structure and use the $l_2$ norm of the projected representations as the individual scores
$$r_{i,j} = \frac{1}{2}||P^{\perp}h_{i,j}||_2^2\ ,\ \ \ \ P^{\perp} = \left(\mathbb{I}-\sum_{i=1}^2 z_{i,j}^*(z_{i,j}^*)^T\right)$$
where $\{z_{1,j}^*,z_{2,j}^*\}$ denotes the orthogonalized version of $\{z_{1,j},z_{2,j}\}$ (so that $P^{\perp}$ is an orthogonal projector). We will illustrate the intuition in the case of linear $g_{\kappa_i}$ in Sec.~\ref{indi_linear_case_proof}.

\subsection{Gradient Regularization}
\label{regu}
 We discuss the gradient regularization for the common component, as the same applies for the individual component. Take the $j$-th sample in $M_1$ (i.e. $m_{1,j}$) as an example, recall that once having defined the common score $s_j$, we can visualize the gradient map $\partial s_j/\partial m_{1,j}$. A simple chain rule reveals $\partial s_j/\partial m_{1,j} = \left(\partial s_j/\partial x_{1,j}\right)\cdot \left(\partial x_{1,j}/\partial m_{1,j}\right)$
where $x_{1,j}$ is the extracted feature representation of $m_{1,j}$ using the pre-trained network $\psi_1$. Since we do \emph{not} fine-tune $\psi_1$, we only have control over $\partial s_j/\partial x_{1,j}$. 

The extracted feature representations from pre-trained networks are usually arduous to interpret. Moreover, the extracted feature variables can be highly correlated/entangled with each other. To obtain a more accurate gradient map of $m_{1,j}$, we need to ensure $f_{\theta_1}$ focuses on the right (group of) feature variables in $x_{1,j}$, i.e., we want $\partial s/\partial x_{1,j}$ to only place large gradients on feature variables that indeed represent the common structure in $m_{1,j}$. Consequently, inspired by Rifai \etal~\cite{Rifai2011ContractiveAE}, we regularize the gradient of the score with respect to the extracted feature representations during training with the following updated objective of Eq.~\ref{barlow_dcca}
\begin{equation}\label{barlow_dcca_final}
\begin{aligned}
\min_{\theta_1,\theta_2} \quad & \sum_i \left(1-\mathcal{C}_{ii}\right)^2 + \gamma \cdot \sum_{i\in \{1,2\}}\sum_{j=1}^n \norm{\frac{\partial s_j}{\partial x_{i,j}}}_{\text{regu}}\\
\textrm{s.t.} \quad &  \text{same constraints as in (1)}\\
\end{aligned}
\end{equation}
where $\norm{\cdot}_{\text{regu}}$ denotes some form of regularization. We make two notes. First, regularizing the gradients incentivizes the model to be cost-efficent and find only the most relevant extracted feature variables that represent the common structure. Second, the choice of regularization matters. It is known that elastic net regularization \cite{Zou2005RegularizationAV} has the grouping effect (i.e., it is able to exclude groups of correlated variables), which we find most applicable due to the aforementioned black-box nature of those feature variables and the potentially high correlations among them. We provide further insights into the connection between the introduced gradient regularization and the traditional regression regularization techniques in the Appendix. 
\section{Gradient Maps in the Linear Setting}
\label{linear_case_proof}
Beyond the intuitions offered in Sec.~\ref{model}, we explore the theoretical properties of the gradient maps in a linear setting. We show that when $f_{\theta_i}$ and $g_{\kappa_i}$ are linear for $i\in \{1,2\}$, computing gradient maps is equivalent to projecting the data onto subspaces that encapsulate the desired structures. 
We assume we have two modalities, $X_1 \in R^{n\times d_1}$ and $X_2 \in R^{n\times d_2}$, where $n$ denotes the number of samples and $d_i$ denotes the feature dimensions of the $i$-{th} modality. 
\vspace{-3.5mm}
\paragraph{Visualization through linear projections} Another way of visualizing the common and individual structures of multimodal data is to find the two bases that span the subspaces for the respective two structures, linearly project the original modalities onto those subspaces, and visualize the projections \cite{Feng2017AnglebasedJA,Lock2011JOINTAI}. We restrict the following discussion to the common structure as the same applies for the individual structures. Deep CCA in essence attempts to find two sets of bases, $\mathcal{U}=\{u_1,\dots,u_k\}$ for $X_1$ where $u_i\in R^{d_1}$ and $\mathcal{V}=\{v_1,\dots,v_k\}$ for $X_2$ where $v_i\in R^{d_2}$, such that the two subspaces spanned by $\mathcal{U}$ and $\mathcal{V}$, respectively, are as close to each other as possible. Feng et.al \cite{Feng2017AnglebasedJA}, on the other hand, uses perturbation theory to find $\mathcal{U}$ and $\mathcal{V}$. Once both sets of bases are determined, one can compute the common structure, for example for each sample $x_{1,j}$ in $X_1$, by projecting the sample onto this subspace through $x_{1,j}^C = x_{1,j}^TP$ where $P = \sum_i u_iu_i^T$. Denoting $U\in R^{d_1\times k}$ and $V\in R^{d_2\times k}$ as matrices whose columns consist of $\{u_1,\dots,u_k\}$ and $\{v_1,\dots,v_k\}$, another way of viewing this projection is to first compute the coordinates of $x_{1,j}$ in the subspace spanned by the basis $U$ through $z_{1,j}^T = x_{1,j}^T U$, then each projected sample is the weighted (by the computed coordinates) sum of the basis, i.e. $x_{1,j}^C = z_{1,j}^T U^T$. Nevertheless, finding the common structure for each modality only using its basis alone does not generalize to unseen (test) data as it is not possible for the model to determine what the modalities have in \emph{common} without even considering the other modality. We show that, in the linear case, computing the gradient maps using the score we introduced are akin to projecting the modalities while appropriately incorporating both sets of bases at test time. 

\subsection{Gradient for the Common Structure}
\label{joint_linear_case_proof} 
We parameterize $f_{\theta_1}$ and $f_{\theta_2}$ using single-layer linear neural networks whose weights are $U$ and $V$, respectively (denoted as $f_{U}$ and $f_V$). We thus have $Z_1 = f_U(X_1) = X_1U$ and $Z_2 = f_V(X_2) = X_2V$. We learn $f_{U}$ and $f_V$ in terms of the objective detailed in Eq.~\ref{barlow_dcca}. Since each pair of samples $\{x_{1,j},x_{2,j}\}_{j}$ in the (batch of) data is independent of each other, without loss of generality, we analyze the gradient maps for the $j$-th pair of samples. 

For the $j$-th pair of samples, we write $z_{1,j}^T = x_{1,j}^T U$ and $z_{2,j}^T = x_{2,j}^T V$. We then write $\Tilde{z}_{1,j}^T= z_{1,j}^T/||z_{1,j}^T||_2$ and  $\Tilde{z}_{2,j}^T = z_{2,j}^T/||z_{2,j}^T||_2$ as the corresponding $l_2$-normalized latent representations. As an example, we compute $\partial s_j/\partial x_{1,j}$ where $s_j$ is defined as in Eq.~\ref{common_score} (see Appendix for derivations)
\begin{equation}
\label{common_grad}
\begin{aligned}
\frac{\partial s_j}{\partial x_{1,j}^T} =  \left(\frac{\partial s_j}{\partial x_{1,j}}\right)^T
= &  \frac{{z}_{1,j}^T{z}_{2,j}}{||z_{1,j}||_2} \left[\Tilde{z}_{2,j}^TU^T -\left(\Tilde{z}_{1,j}^T\Tilde{z}_{2,j}\right)\Tilde{z}_{1,j}^T U^T\right]\\
&\quad \quad \quad + \left(\Tilde{z}_{1,j}^T\Tilde{z}_{2,j}\right) z_{2,j}^TU^T
\end{aligned}
\end{equation}
To facilitate understanding, we interpret Eq.~\ref{common_grad} in two extremes. If $x_{1,j}$ and $x_{2,j}$ do not have much in common, i.e. the similarity between their corresponding normalized latent representations is close to 0 $\left(\Tilde{z}_{1,j}^T\Tilde{z}_{2,j} \rightarrow 0\right)$, both $\partial s_j/\partial x_{1,j}^T$ and $\partial s_j/\partial x_{2,j}^T$ consequently tend to 0, which is what we would have expected as none of the extracted feature variables contributes to the common score. On the other hand, if $x_{1,j}$ and $x_{2,j}$ are very similar to each other, resulting in the similarity between their corresponding normalized latent representations being close to 1 $\left(\Tilde{z}_{1,j}^T\Tilde{z}_{2,j} \rightarrow 1\right)$, we then have (see Appendix for derivations)
$$\frac{\partial s_j}{\partial x_{1,j}^T} \propto x_{1,j}^T UU^T,\quad \frac{\partial s_j}{\partial x_{2,j}^T} \propto x_{2,j}^T VV^T $$
which is in line with the perspective of finding the common structure through linear projections whereas the weights of the linear neural networks are the respective sets of bases. Any in-between scenario can be viewed as finding the common structure in each modality by incorporating its own basis and the basis from the other modality. This desired behavior of the gradient map in the linear setting motivates our definition of the score function defined in Eq.~\ref{common_score} for the general nonlinear case.

%
%

\subsection{Gradient for the Individual Structures}
\label{indi_linear_case_proof}
We demonstrate our approach for finding the individual structures (that are specific to each input modality) in the case of linear neural nets with one layer. In this case, we seek another two sets of bases, $\mathcal{W}=\{w_1,\dots,w_k\}$ for $X_1$ where $w_i\in R^{d_1}$ and $\mathcal{S}=\{s_1,\dots,s_k\}$ for $X_2$ where $s_i\in R^{d_2}$, that span the subspaces that encompass the individual structures of $X_1$ and $X_2$, respectively. We denote $W\in R^{d_1\times k}$ and $S\in R^{d_2\times k}$ as matrices whose columns consist of $\{w_1,\dots,w_k\}$ and $\{s_1,\dots,s_k\}$. Same as before, we view $W$ and $S$ as the weights of the linear neural networks that parameterize the functions $g_{\kappa_1}$ and $g_{\kappa_2}$, denoted as $g_{W}$ and $g_{S}$, respectively. We focus the discussion below on $X_1$ as the same applies to $X_2$. 
\vspace{-4.3mm}
\paragraph{The Orthogonality Constraint in Eq.~\ref{indi_opti}} Let $H_1 = g_W(X_1)= X_1W$. Recall that in Eq.~\ref{indi_opti}, the reconstruction objective encourages the model to learn meaningful $H_1$ whereas the constraint is to ensure $H_1$ captures different information as opposed to $Z_1$. We make two notes. First, we note that $\hat{H}_1$ and $\hat{Z}_1$ in Eq.~\ref{indi_opti} are assumed to be column-whitened. However, as whitening does not change the direction of the vectors, we can rephrase the constraint in Eq.~\ref{indi_opti} as learning a $W$ such that $H_1$ is orthogonal to $Z_1$. Second, we note that the constraint in Eq.~\ref{indi_opti} requires each column of $H_1$ to be orthogonal to every column in $Z_1$ (and vice versa). We first delineate the ramification of enforcing the orthogonality constraint between the $i$-th column of $H_1$ (i.e. $h^{1,i}$) and the $j$-th column of $Z_1$ (i.e. $z^{1,j}$). We then discuss the orthogonality constraint as a whole. Writing the $i$-th column of $H_1$ and the $j$-th column of $Z_1$ as $h^{1,i} = X_1 w^{1,i}$ and $z^{1,j} = X_1 u^{1,j}$, we have the following lemma (see the Appendix for proof) 
\begin{lemma}
Define the Mahalanobis norm of a vector $x$ with respect to a positive definite matrix $A$ as $||x||_A^2 = x^T A x$. Learning a $w^{1,i}$ that results in $\left(h^{1,i}\right)^Tz^{1,j}/n = 0$ is equivalent to searching for a direction $w^{1,i}$ that is orthogonal to $u^{1,j}$ in a Mahalanobis sense, i.e. $||w^{1,i}+u^{1,j}||_{\Sigma}^2 = ||w^{1,i}||_{\Sigma}^2+||u^{1,j}||_{\Sigma}^2$ where $\Sigma = X_1^T X_1/n$. 
\end{lemma}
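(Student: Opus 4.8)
The plan is to reduce both sides of the claimed equivalence to a single bilinear expression and show they coincide. First I would translate the correlation (inner-product) constraint into a statement purely about the direction vectors $w^{1,i}$ and $u^{1,j}$. Since $h^{1,i} = X_1 w^{1,i}$ and $z^{1,j} = X_1 u^{1,j}$, a direct substitution gives
\[
\frac{(h^{1,i})^T z^{1,j}}{n} = (w^{1,i})^T \frac{X_1^T X_1}{n}\, u^{1,j} = (w^{1,i})^T \Sigma\, u^{1,j},
\]
so the decorrelation constraint $(h^{1,i})^T z^{1,j}/n = 0$ is exactly the condition $(w^{1,i})^T \Sigma\, u^{1,j} = 0$, i.e. $w^{1,i}$ and $u^{1,j}$ are orthogonal with respect to the bilinear form induced by $\Sigma$.

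Next I would expand the Mahalanobis norm of the sum. Using the definition $||x||_{\Sigma}^2 = x^T \Sigma x$ together with bilinearity,
\[
||w^{1,i} + u^{1,j}||_{\Sigma}^2 = ||w^{1,i}||_{\Sigma}^2 + ||u^{1,j}||_{\Sigma}^2 + 2\,(w^{1,i})^T \Sigma\, u^{1,j},
\]
where I have used the symmetry of $\Sigma$ to merge the two cross terms $(w^{1,i})^T \Sigma\, u^{1,j}$ and $(u^{1,j})^T \Sigma\, w^{1,i}$. Comparing with the claimed Pythagorean identity, the two sides agree precisely when the cross term $2\,(w^{1,i})^T \Sigma\, u^{1,j}$ vanishes, which by the first step is exactly the decorrelation constraint. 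Chaining these two equivalences closes the argument in both directions.

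There is essentially no heavy computation here, so the points worth flagging are conceptual rather than technical. One is the observation (already made in the surrounding text) that column-whitening rescales but does not rotate the columns, which justifies analyzing the constraint directly on $Z_1$ and $H_1$. Another is that the symmetry of $\Sigma = X_1^T X_1/n$ is what lets the two cross terms combine into a single $2\,(w^{1,i})^T \Sigma\, u^{1,j}$, and is also what makes the resulting $\Sigma$-orthogonality condition symmetric in $w^{1,i}$ and $u^{1,j}$. Finally, for $||\cdot||_{\Sigma}$ to be a genuine norm one needs $\Sigma$ to be positive definite, matching the hypothesis of the lemma; positive semidefiniteness always holds, and full column rank of $X_1$ upgrades it to positive definiteness. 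The main obstacle, such as it is, is therefore purely interpretive: recognizing that ordinary Euclidean orthogonality of the learned representations $h^{1,i}$ and $z^{1,j}$ corresponds to $\Sigma$-weighted (Mahalanobis) orthogonality of the underlying weight directions, with the weighting given exactly by the feature Gram matrix.
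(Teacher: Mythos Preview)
Your proposal is correct and follows essentially the same approach as the paper's proof: both substitute $h^{1,i}=X_1 w^{1,i}$ and $z^{1,j}=X_1 u^{1,j}$ to reduce the constraint to $(w^{1,i})^T\Sigma\,u^{1,j}=0$, then expand $||w^{1,i}+u^{1,j}||_{\Sigma}^2$ and read off the equivalence from the vanishing of the cross term. Your additional remarks on whitening, symmetry of $\Sigma$, and positive definiteness are accurate and slightly more explicit than the paper's version.
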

\noindent Therefore, if we further assume that $X_1$ is pre-whitened, we can interpret the orthogonality constraint in its entirety as learning a set of basis, $\{w_1,\dots,w_k\}$, for the individual structure such that each basis $w_i$ is orthogonal, measured in terms of the sample correlation matrix of $X_1$, to the entire set of basis $\left(\{u_1,\dots,u_k\}\right)$ that represents the common structure. 

\paragraph{Gradient Map for the Individual Structure} For the $j$-th sample in $X_1$ $\left(\text{i.e.}\ x_{1,j}\right)$, we compute the gradient map $\partial r_{1,j}/\partial x_{1,j}$ as (see Appendix for details)
$$\frac{\partial r_{1,j}}{\partial x^T_{1,j}} = \left(\frac{\partial r_{1,j}}{\partial x_{1,j}}\right)^T =  x_{1,j}^T W^{\perp}(W^{\perp})^T$$
where $W^{\perp} = W(P^{\perp})^T$.
In words, in the linear case, computing the gradient map is equivalent to first (orthogonally) projecting each coordinate of the original basis $W$ using $P^{\perp}$, and then projecting the sample onto the subspace that spanned by the projected set of basis $W^{\perp}$.

\section{Experiments}
\label{experiments}
We construct \MUCMM\ in PyTorch \cite{Paszke2019PyTorchAI} and demonstrate its efficacy through a range of experiments. When the inputs are images, we compute the saliency maps \cite{Adebayo2018SanityCF, Simonyan2013DeepIC,Alqaraawi2020EvaluatingSM, Tomsett2019SanityCF} in lieu of the gradient maps from the last convolutional layer of the pre-trained feature extractor for better visualization. We refer readers to the Appendix for experimental details like the data split, the network architectures, etc. 

\subsection{Alternative Formation of Deep CCA}
\label{barlow_advantage}
As detailed in Sec.~\ref{common_struc}, we propose an equivalent formulation of the deep CCA objective that is easier to optimize. We show that our learned common representations achieve higher cross-modality correlation. We also include extra results on the cross-modality recognition task, which we detail in the Appendix. 

 \begin{table}[h]
\begin{center}
\begin{tabular}[!t]{c| c c c c c }
 \hline
 $k$ & 50 & 100 & 200 & 500 & 1000\\
 \hline
 Upper Bound & 50 & 100 & 200 & 500 & 1000\\
 \hline\hline
 CCA \cite{Hardoon2004CanonicalCA} & 28.3 & 34.2 & 48.7 & 74.0 & - \\ 
 
 Deep CCA \cite{Andrew2013DeepCC} & 29.5 & 44.9 & 59.0 & 84.7 & - \\

 DCCAE \cite{Wang2016OnDM} & 29.3 & 44.2 & 58.1 & 84.4 & -\\

  CorrNet \cite{Chandar2015CorrelationalNN} & 44.5 & - & - & - & -\\

 SDCCA \cite{Wang2015StochasticOF} & \textbf{46.4} & 89.5 & 166.1 & 307.4 & -\\

 Soft CCA \cite{Chang2017ScalableAE} & 45.5 & 87.0 & 166.3 & 356.8 & 437.7\\ 

  \MUCMM\  & 46.1 & \textbf{89.8} & \textbf{169.8} & \textbf{422.3} & \textbf{638.2}\\ 
\hline
\end{tabular}
\caption{Sum/total cross-modality correlation.}
\label{mnist_corr}
\end{center}
\vspace{-8mm}
\end{table}

Following the experimental setups of CorrNet \cite{Chandar2015CorrelationalNN} and Soft CCA \cite{Chang2017ScalableAE}, we use the MNIST dataset. MNIST \cite{LeCun2005TheMD} consists of a training set of 60000, and a test set of 10000, images of 28$\times$28 hand-written digits. We treat the left and right halves of the images as the two modalities, and train the common component of \MUCMM\ on the training set and report the (mean) cross-modality (total/sum) correlation on the test set. We vary the dimension of the common latent space (i.e. $k$ in Sec.~\ref{common_struc}) and report the results based on the different choices of $k$. We note that we use the same network architecture as in CorrNet and Soft CCA for fair comparison.
The results for the cross-modality correlation task are shown in Tab.~\ref{mnist_corr}. We see that \MUCMM\ outperforms all other methods for nearly all choices of $k$. As aforementioned, the proposed objective in Eq.~\ref{barlow_dcca} is easier to optimize than the CCA objective (which is used in all methods in Tab.~\ref{mnist_corr} other than the Soft CCA) because it offers the model an explicit goal to reach (as opposed to maximizing a quantity without informing the model what the limit is). Soft CCA reformulated the CCA objective to an equivalent one of minimizing the ${L}_2$ distance between the common latent representations. However, since the $L_2$ distance between latent representations is of significantly different scale compared to statistical (de)correlation, our formulation does not require careful weighting among quantities of different scales. This advantage is especially evidential for the cross-modality recognition task of which we expound on in the appendix.
\begin{figure}
\begin{subfigure}[t]{0.23\textwidth}
    \includegraphics[width=\textwidth]{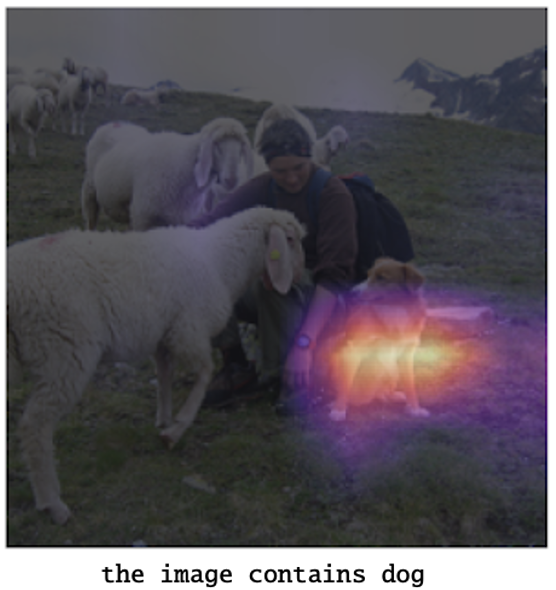}
    \includegraphics[width=\linewidth]{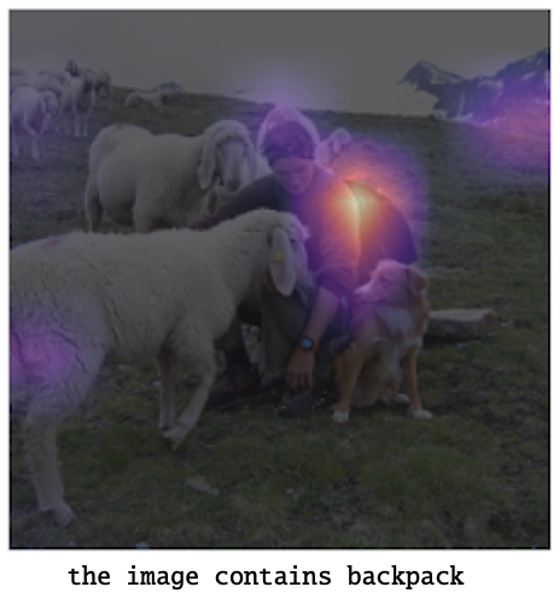}
\end{subfigure}
\smallskip 
\begin{subfigure}[t]{0.23\textwidth}
    \includegraphics[width=\linewidth]{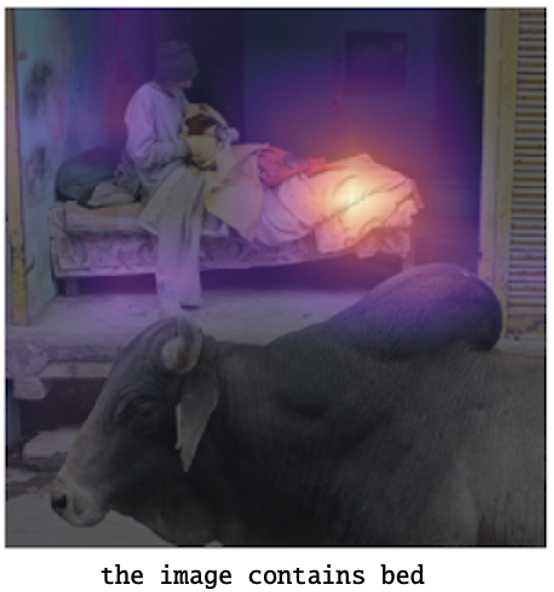}

    \includegraphics[width=\linewidth]{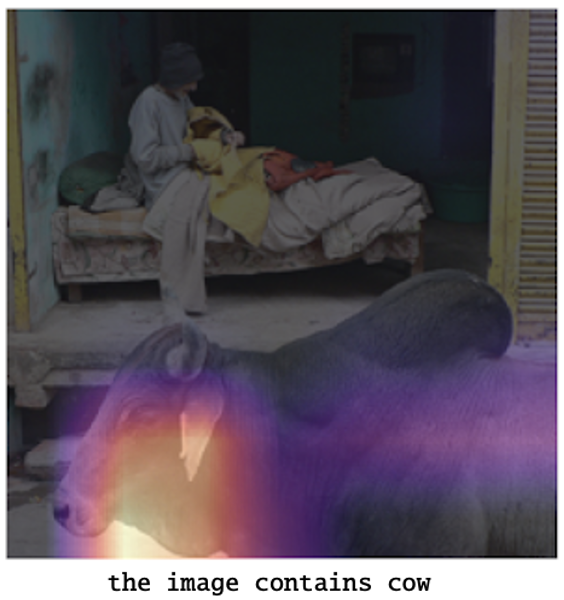}
\end{subfigure}
\smallskip 
\begin{subfigure}[t]{0.23\textwidth}
    \includegraphics[width=\linewidth]{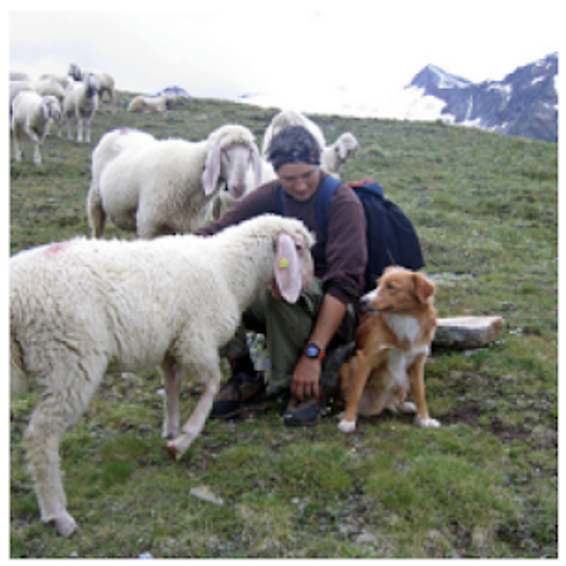}
    \caption{Example 1}
\end{subfigure}\hspace{\fill} 
\begin{subfigure}[t]{0.23\textwidth}
    \includegraphics[width=\linewidth]{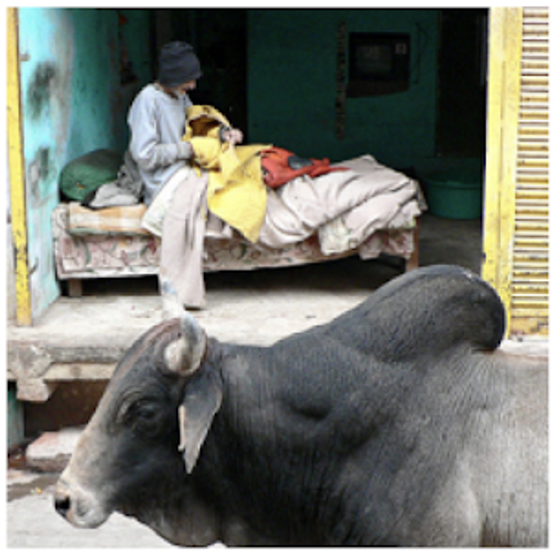}
    \caption{Example 2}
\end{subfigure}
\vspace{-1mm}
\caption{Two examples (columns) of OOI visualizations at test time are given. The first two rows contain the saliency maps for the exampled images given different text prompts, where the texts below the images are the prompts. The third row contains the original input images.}
\vspace{-1mm}
\label{coco_roi}
\end{figure}

\subsection{Paired COCO}
We design two experiments to demonstrate \MUCMM\ appropriately captures the common and individual structures between the modalities on the Common Objects in Context (COCO) dataset \cite{Lin2014MicrosoftCC}. COCO has 80 fine-grained classes with more than 200k labeled images. Each image in COCO is associated with a list of labels that indicates the objects in that image (as most images contain multiple objects). We refer readers to the Appendix for how we generate the saliency maps with \MUCMM.
\begin{figure*}
\setlength{\tabcolsep}{0.002\textwidth}
\begin{tabular}{@{}cccccc@{}}
\begin{subfigure}[t]{0.16\textwidth}
\includegraphics[width=\textwidth,valign=T]{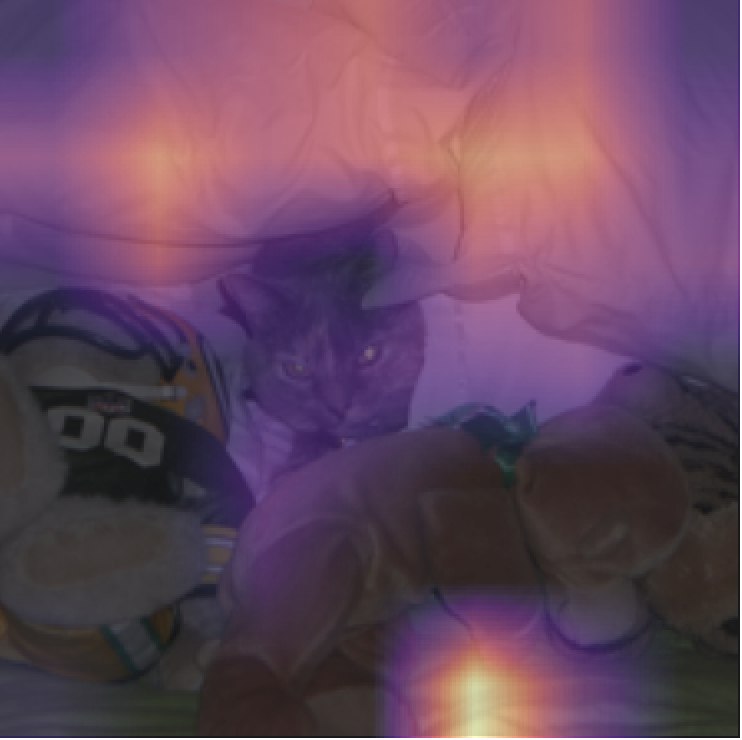}
\end{subfigure} &
\begin{subfigure}[t]{0.16\textwidth}
\includegraphics[width=\textwidth,valign=T]{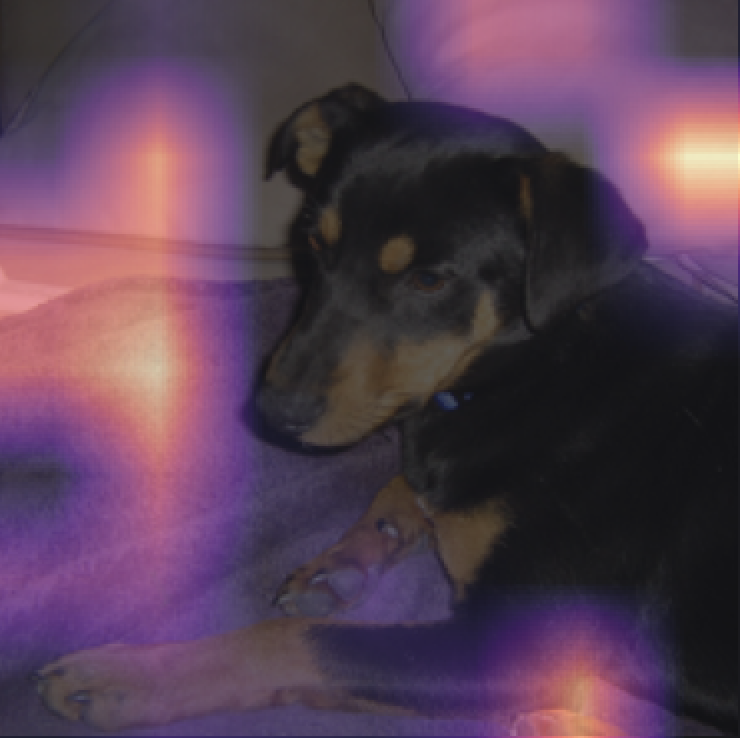}
\end{subfigure} &
\begin{subfigure}[t]{0.16\textwidth}
\includegraphics[width=\textwidth,valign=T]{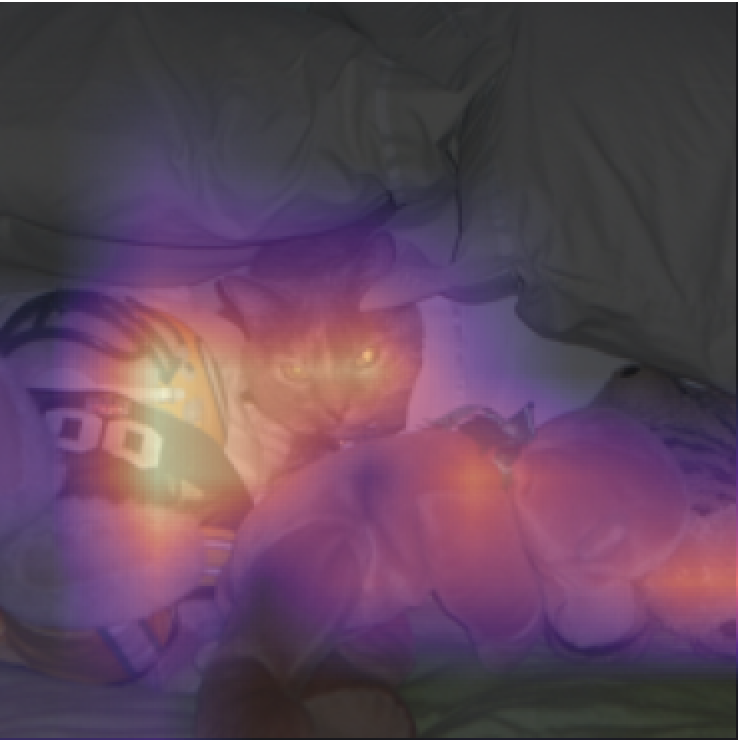}
\end{subfigure}&
\begin{subfigure}[t]{0.16\textwidth}
\includegraphics[width=\textwidth,valign=T]{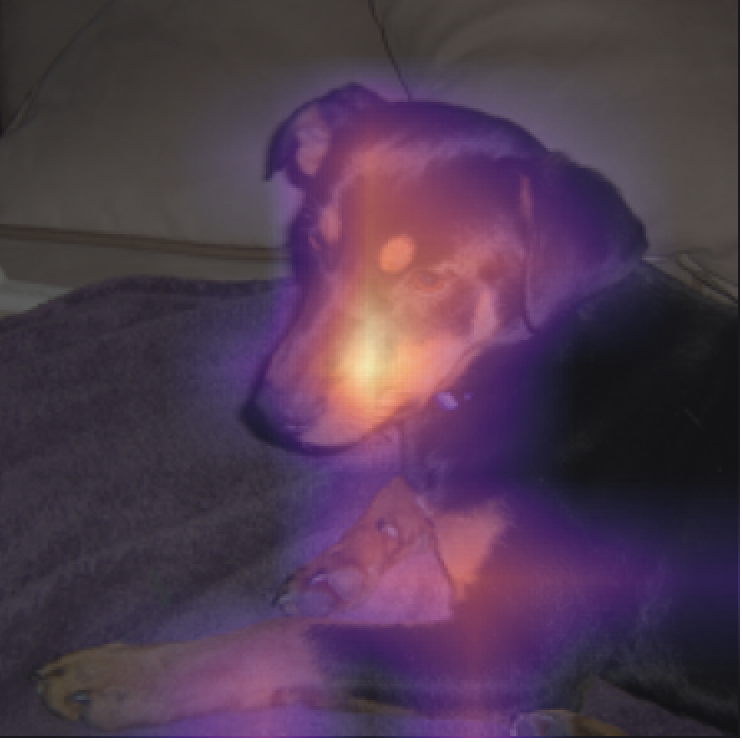}
\end{subfigure}&
\begin{subfigure}[t]{0.16\textwidth}
\includegraphics[width=\textwidth,valign=T]{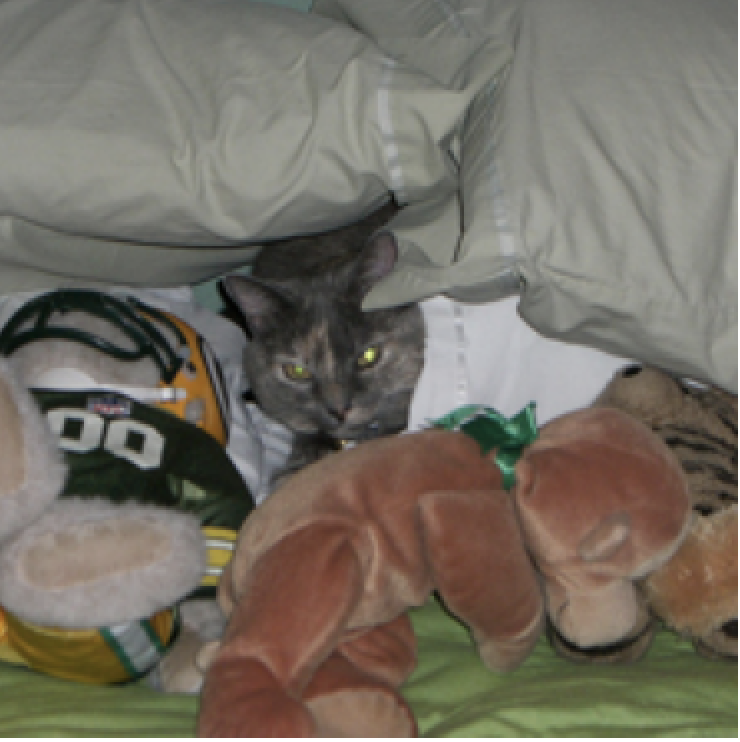}
\end{subfigure}&
\begin{subfigure}[t]{0.16\textwidth}
\includegraphics[width=\textwidth,valign=T]{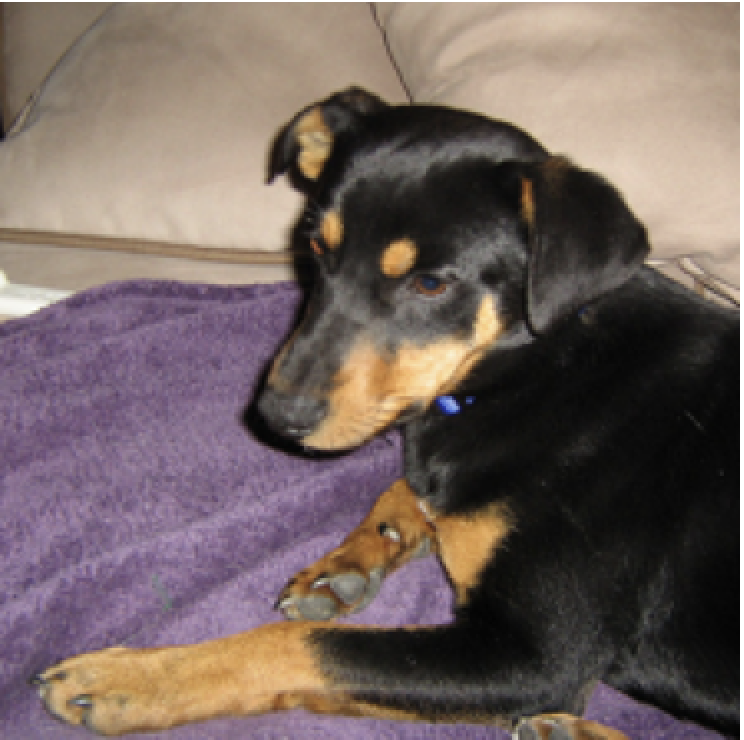}
\end{subfigure}\\
\begin{subfigure}[t]{0.16\textwidth}
\includegraphics[width=\textwidth,valign=T]{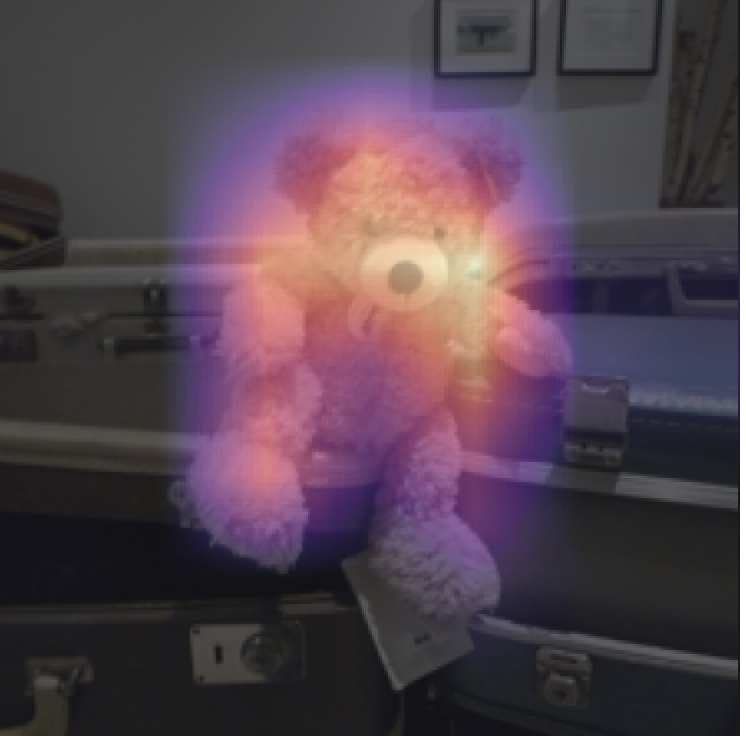}
\end{subfigure} &
\begin{subfigure}[t]{0.16\textwidth}
\includegraphics[width=\textwidth,valign=T]{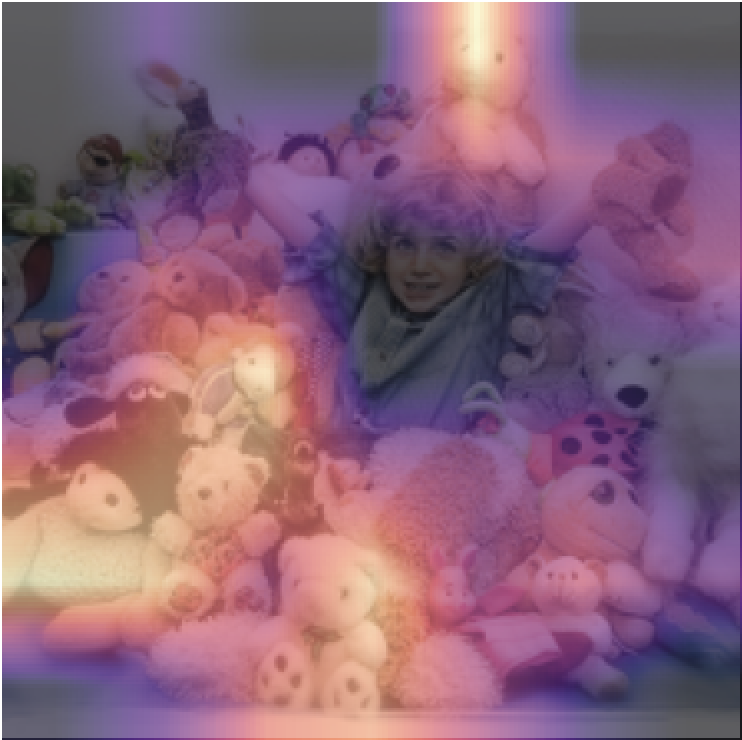}
\end{subfigure} &
\begin{subfigure}[t]{0.16\textwidth}
\includegraphics[width=\textwidth,valign=T]{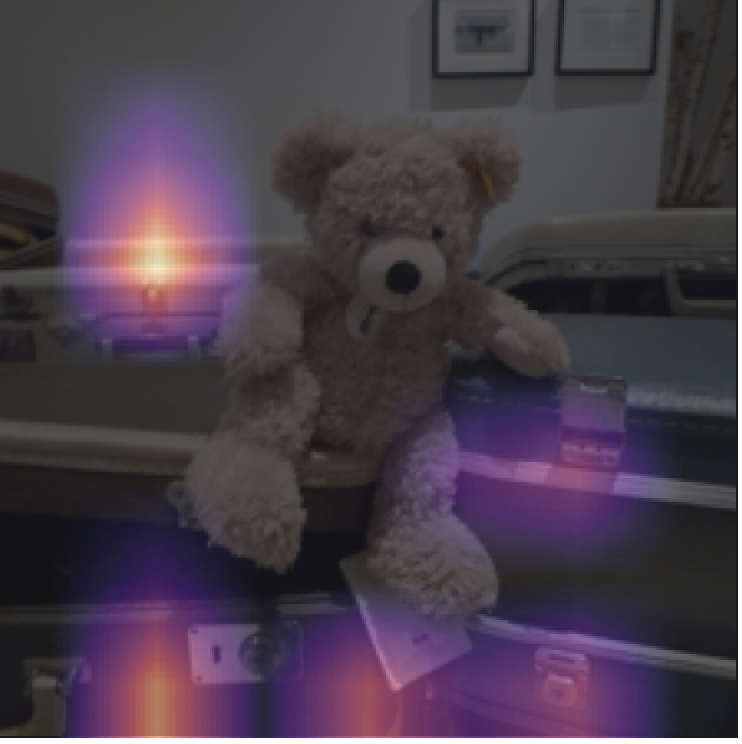}
\end{subfigure}&
\begin{subfigure}[t]{0.16\textwidth}
\includegraphics[width=\textwidth,valign=T]{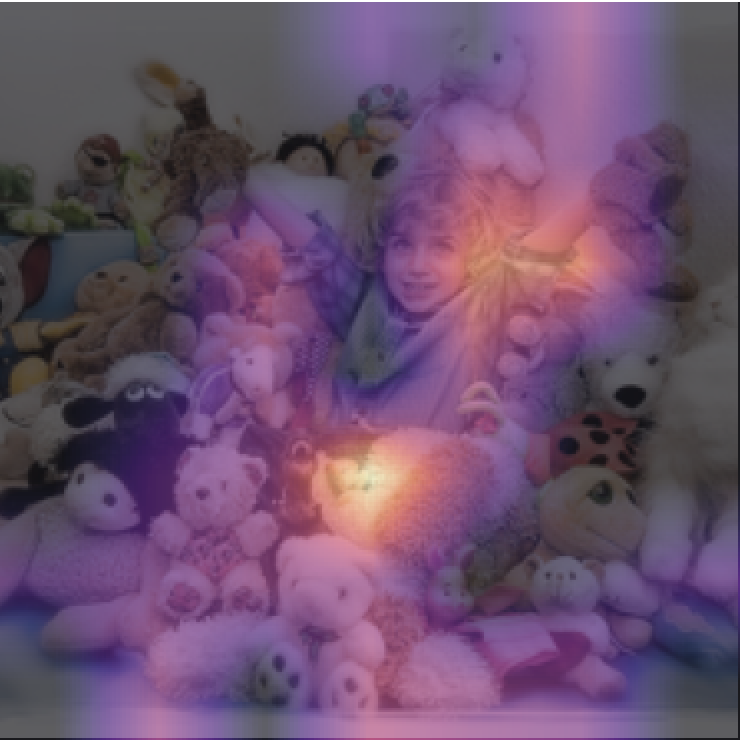}
\end{subfigure}&
\begin{subfigure}[t]{0.16\textwidth}
\includegraphics[width=\textwidth,valign=T]{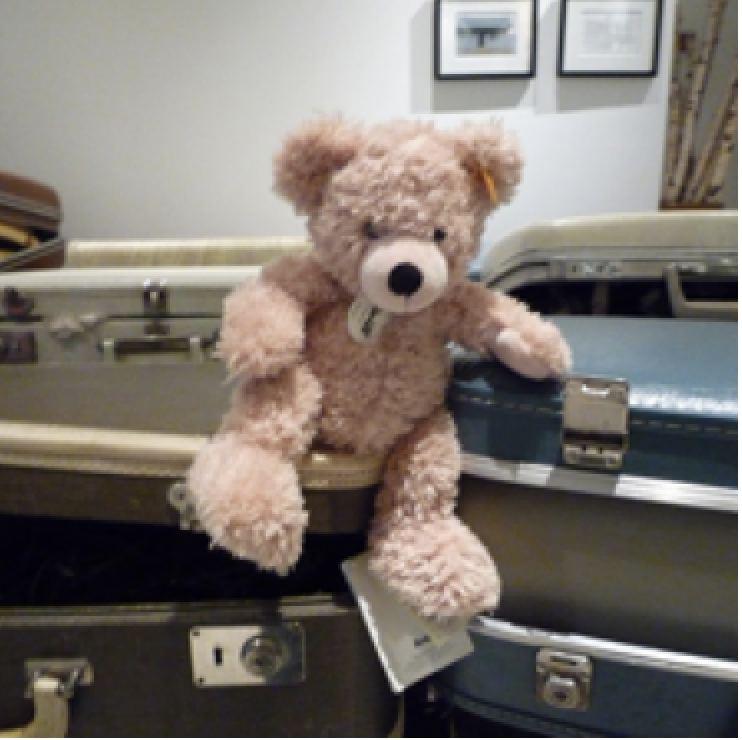}
\end{subfigure}&
\begin{subfigure}[t]{0.16\textwidth}
\includegraphics[width=\textwidth,valign=T]{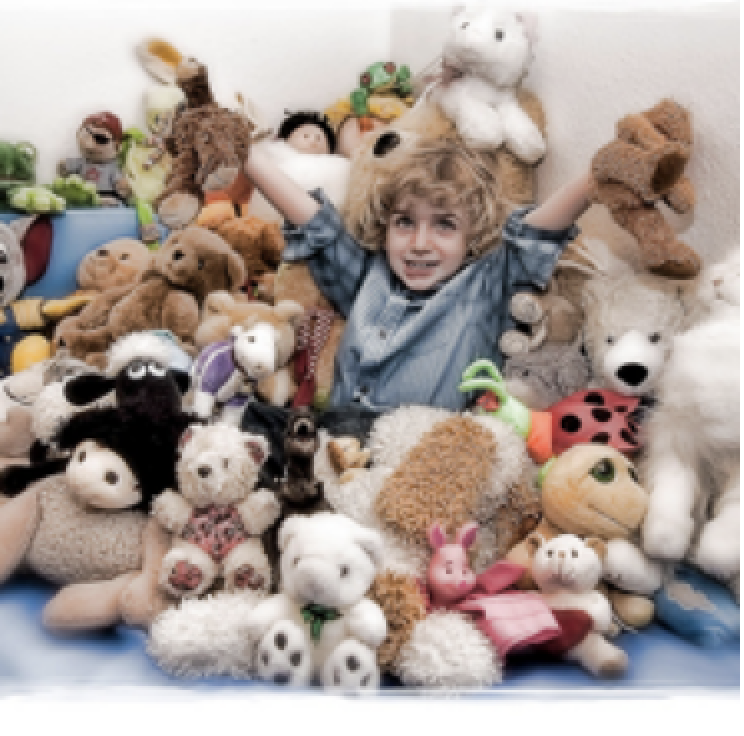}
\end{subfigure}\\
\multicolumn{2}{p{0.3\textwidth}@{}}{\subcaption{Common Structure}}&
\multicolumn{2}{p{0.3\textwidth}@{}}{\subcaption{Individual Structure}}&
\multicolumn{2}{p{0.3\textwidth}@{}}{\subcaption{Original Image Pair}}\\
\end{tabular}
\vspace{-3mm}
\caption{The results for two image pairs are shown (with each row representing a pair). The Grad-CAM visualizations for the common and the individual structures are shown in the first and the middle two columns, respectively. The original image pairs are shown in the last two columns. As one can see, the common objects (the pillows and blankets in the first, and the toys that face towards the camera in the second) are adequately captured by the common component. Once the common objects are identified, the individual component can then focus on the objects that are unique to the images.}
\label{paired_coco_ims}
\vspace{-3mm}
\end{figure*}
\vspace{-3mm}
\subsubsection{Object of Interest (OOI) Visualization}
\label{roi} 
We show that by providing \MUCMM\  with a text description of the object of interest (among potentially multiple objects) in an image, \MUCMM\ is able to capture the described object in that image. The modalities we use are image and text. We use the images for the image modality. For the text modality, we randomly sample a label from the list of labels for each image and construct the sentence ``\verb|the image contains {label}|'' as the input text. We use the pre-trained ResNet50 \cite{He2015DeepRL} and pre-trained BERT \cite{Devlin2019BERTPO} to extract the feature representations of the images and texts, respectively. 
The saliency maps for the input images, obtained from the common component of \MUCMM\ using the common score introduced in Sec.~\ref{common_struc}, are shown in Fig.~\ref{coco_roi}. As one can see, \MUCMM\ is able to locate the object described by the paired text in the image. Furthermore, \MUCMM\ can adapt to different text prompts at \emph{test} time and alter the object of interest highlighted in the same image correspondingly. 
\begin{table}[h]
\vspace{-0.5mm}
\begin{center}
    \begin{tabular}{|c|c|c|c|c|}
    \hline
    &\uni& CLIP&\MUCMM\ orig& \MUCMM\ w/ regu \\ \hline
    AP & 10.9& 20.6 & 18.1& \textbf{21.2} \\ \hline

    \end{tabular}
    \caption{Averge precision (AP) of the saliency maps on the COCO validation set. Unif. refers to the simple baseline of placing uniform weights over all pixels in each image, i.e. the ground truth proportion of the target object in the image. CLIP \cite{Radford2021LearningTV} refers to computing the common score (Eq.~\ref{common_score}) directly using the extracted features from a pre-trained CLIP model, and use that score to compute the saliency maps.} 
    \label{match_precision}
\end{center}
\vspace{-6mm}
 \end{table}
 
We also measure the quantitative quality of the saliency maps by calculating the average precision (AP) of them against the ground truth masks of the objects. For each saliency map, we normalize it so the gradient values sum to 1, and then calculate the proportion of the map that covers the ground truth object (hence precision) it is supposed to highlight. Since we compute a saliency map for each object in each image, AP is the precision averaged over all objects in all images in the test set. The result is shown in Tab.~\ref{match_precision}. We make three notes. First, \MUCMM\ performs substantially better than the uniform baseline. Second, regularized \MUCMM\ generalizes better on unseen data, corroborating that the proposed regularization improves \MUCMM's robustness by encouraging it to focus more on the relevant feature variables. Last but not least, regularized \MUCMM\ is on par with CLIP, a strong baseline for this task since CLIP is trained in a contrastive manner to distinguish true image-text pairs, which is only possible if CLIP learns to identify the common structure between the correct pairs. In comparison, \MUCMM\ can be incorporated with any pre-trained networks, as we have done here with ResNet50 and Bert, and achieves competitive performance. 

\vspace{-2mm}
\subsubsection{Pair of Images}
\begin{table*}[h]
\begin{center}
    \begin{tabular}{|c|c|c|c|c|c|}
    \hline
    &\uni& \baseresnet&\baseclip&\MUCMM\ orig& \MUCMM\ w/ regu \\ \hline
    $\text{AP}_C$ & 14.9 & 16.5 & 18.9 &  18.1& \textbf{22.7} \\ \hline
    $\text{AP}_{I}$ & 20.1 &  21.5 & 26.9 & 26.9& \textbf{27.5} \\ \hline
    $\text{AP}_{I}^{\text{obj}}$& 65.5  & 65.1 & 57.4 & 68.2& \textbf{70.1} \\ \hline

    \end{tabular}
    \caption{Average precision (AP) of the saliency maps on the test set, where $\text{AP}_C$ and $\text{AP}_I$ are that for the common and individual objects, respectively. $\text{AP}_{I}^{\text{obj}}$ is AP for the individual objects when \emph{not} considering the background, i.e. it indicates the average precision of the saliency map for the individual objects as opposed to all objects, disregarding the background. Unif. refers to the average ground truth proportions of the common/individual objects. \baseresnet and \baseclip\ refer to the baselines where we directly use the extracted features by applying the ResNet50 and CLIP-ResNet50 on both image modalities, respectively, for computing the saliency maps.} 
    \label{pair_precision}
\end{center}
\vspace{-6mm}
 \end{table*}

We illustrate that \MUCMM\ captures the common and individual structures between two image modalities. We construct a dataset of pairs of images using the COCO dataset. We generate each pair of images by first randomly sampling a class from the available 80 fine-grained classes. Given the sampled class, we randomly sample two images belonging to this class as the pair of images (note that the sampled images might belong to other classes as well because most images in COCO contain multiple objects). We choose the pre-trained original ResNet50 \cite{He2015DeepRL} and the CLIP ResNets50 \cite{Radford2021LearningTV} as the feature extractors for the paired images.

We visualize the learned structures for two sample pairs in Fig.~\ref{paired_coco_ims}. As one can see, \MUCMM\ adequately captures the common and individual objects within each image pair. As most machine learning models produce, though powerful, black-box latent representations of complex data that are not interpretable, being able to visually understand the latent representations learned by \MUCMM\ is highly desirable. 

Last but not least, we provide the AP of the saliency maps for the common and individual objects of the image pairs. For an image pair, we compute its precision by computing the pair of saliency maps for the common/individual objects, calculating the per-image precision within the pair, and averaging the per-image precision. The results are in Tab.~\ref{pair_precision}. 
We make four notes. First, we again see that \MUCMM\ performs better than the uniform baseline for all three APs. Second, for all three APs, \MUCMM\ with regularization outperforms directly using the pre-trained image features (\baseresnet and \baseclip\ in Tab.~\ref{pair_precision}) for computing the saliency maps. This is sensible, as the two baselines are computed directly from the extracted features and are not concept-aware of common versus individual. Third, the regularized \MUCMM\ again outperforms its vanilla counterpart, substantiating the importance of focusing on the right (group of) extracted feature variables. Last but not least, since background accounts for most parts of the image for most images and the individual component, while designed to capture aspects of the images that are not the common objects, can capture the backgrounds, we remove backgrounds in calculating the AP and report it as $\text{AP}_{I}^{\text{obj}}$ in Tab.~\ref{pair_precision}. We see that after removing backgrounds, \MUCMM, with or without regularization, focuses more on the individual objects as intended. All three baselines on the other hand performs worse than the \uni\ baseline as they do not have a sense of common versus individual and focus undesirably on the common objects when they are not supposed to. 
\vspace{-3mm}
\paragraph{Additional Study---Common Score} As mentioned in Sec.~\ref{common_struc}, we design the common score to be dependent on the common representations from both modalities as it is not possible to discern what is in common without examining both modalities for unseen data pairs. To show the importance of such design, we construct a test dataset where each image pair has no objects in common. We evaluate \MUCMM\ and the baselines using different choices of the common score on this dataset. For each image pair, we compute the pair of saliency maps based on the specific common score, normalize the saliency maps so that the maximum value of each saliency map is 1 (and all values are between 0 and 1), and compute the saliency magnitude by averaging the sum of each saliency map within the pair. Intuitively, the less the saliency maps focus on the images (as they are not supposed to when no common objects exist), the smaller the saliency magnitude will be. The results are in Tab~\ref{different_score_precision}. 
 \begin{table}[h]
\begin{center}
    \begin{tabular}{|c|c|c|c|c|}
    \hline
    & Eq.~\ref{common_score}  &$\Tilde{z}_{1,i}^T\Tilde{z}_{2,i}$& $||{z}_{\cdot,i}||_2^2$ \\ \hline
   \baseresnet & \textbf{16.2}  & 18.3&47.6 \\ \hline
    \baseclip & \textbf{2.6} & 4.0&11.7 \\ \hline
    \MUCMM\ orig & \textbf{15.7} & 16.6&41.4 \\ \hline
    \MUCMM\ w/ regu & \textbf{14.3} & 16.2&34.9 \\ \hline
    \end{tabular}
    \caption{The saliency magnitudes of different models with different common scores. Each row/column represents different combinations of model/common score, respectively. The smallest numbers (across each row) are bolded.} 
    \label{different_score_precision}
\end{center}
\vspace{-6mm}
 \end{table}
We make two notes. First, the saliency maps put much less weight on the images when the common score depends on the common representations from both modalities (the second and third columns), compared to only using the $l_2$ norm of the representations as the score (the last column). Second, the minimal saliency magnitude is achieved when using our proposed score (Eq.~\ref{common_score}) for all models (rows), corroborating the desired boundary behavior of Eq.~\ref{common_score} illustrated in Sec.~\ref{joint_linear_case_proof}. 
 
\section{Conclusion}
We introduce \MUCMM, a general framework for learning the common and individual structures of multimodal data that 1) is designed to be lightweight (i.e. it can be incorporated with any appropriate large pre-trained networks, requires relatively much less data to train, and achieves similar or even superior performances compared to specialized large networks trained on magnitudes of more data), 2) emphasizes interpretability, and 3) incorporates contractive regularization for better representation learning. The interpretability of \MUCMM\ comes in twofold: 1) To the best of our knowledge, this is the first attempt to visualize the learned structures in a model-agnostic manner (i.e. no constraints on the architectures of the pre-trained feature extractors or \MUCMM\ itself) by introducing novel scores that summarize the structures; and 2) Mathematical intuitions of the gradient maps are provided in the case of linear \MUCMM. Through a range of quantitative measures and visualizations, we demonstrated that \MUCMM\ learns the desired structures of multimodal data in a manner that is generalizable and adaptive at test time. 

\section{Appendix}
\subsection{Additional Insights}
\label{insights}
\subsubsection{Connecting Gradient Regularization to Regularized Regression}
\label{regu_grad_regression}
We now connect the introduced gradient regularization to regularized regressions in the linear setting. We again focus the following discussion on the gradient for the common structure as the same applies to that for the individual structures. 
We can rewrite Eq.~8 as
$$\frac{\partial S}{\partial x_{1,j}^T} = c^TU^T = \sum_{i=1}^k c_i u_i^T$$
where $c\in R^{n\times 1}$ encompasses the expressions that come before $U^T$ in all terms in Eq.~8 and $c_i$ is the $i$-th entry of $c$. In the special case of $k=1$, it is then easy to observe that
$$\norm{\frac{\partial S}{\partial x_{1,j}^T}}_{p} = |c| \norm{u_1}_p$$
where $\norm{\cdot}_{p}$ refers to the vector $p$-norm. In this case, regularizing the gradients is equivalent to regularizing $u_1$, which is akin to the regression parameters in that each element of $u_1$ reflects the importance of its corresponding extracted feature variable. Depending on the choice of $p$, such regularization includes lasso, ridge, and elastic net as special cases. In a traditional regression setting, regularizing the regression parameters would penalize the corresponding predictors; in our setting, regularizing $u_1$ enables the model to learn to ``ignore'' extracted feature variables that do not contribute enough to the common structure.

\subsubsection{Training the Components Separately}
We mentioned in the main manuscript that we first train the common component of \MUCMM\ to learn the common representations of the input modalities. We then freeze the common component and use it to train the individual component to learn the individual representations (recall Fig.~1 in the main manuscript). We explain our choice of training the two components in stages and point out some possibilities for future studies in this section. 

The main reason we train the two components in stages is because we found that when jointly training them, the model tends to only use either the common or the individual representations for reconstructing the inputs, but not both. This behavior is not surprising, because if the model can obtain a good reconstruction performance using only one type of representations, there is no incentive for the model to use both. Therefore, when we attempted to train both components together, we observed that the model \emph{only} uses the common representations for reconstructions, and thus learns individual representations that are orthogonal to the common representations but \emph{not} faithful to the original input modalities (as we mentioned in the main manuscript, there are multiple, potentially infinite many, individual representations orthogonal to the common representations; however, most of them would not be meaningful as they do not represent the original inputs). A potential quick fix to this issue is to reduce the representational capacity of the common and individual representations (e.g. by lowering their latent dimensions) to incentivise the model to use both representations, but in practice we find it difficult to tune this latent dimension hyperparameter, and different datasets/applications might require different latent dimensions. We therefore decide to train the two components separately in this work. 

For future studies, one potentially interesting direction is to regularize the representational capacities of those representations (using our introduced contractive regularization or some other differentiable regularizers). An important question in this case would be how to weight the regularizations for the common and individual representations in a data-driven manner so the model would not require manual tuning when applying it to new datasets. We plan on exploring this direction in future iterations of this work. 

\subsection{Derivations and Proof}
\label{proof}
\begin{prop} Defining the score for the common structure as
\begin{equation}\label{common_score_append}
s_i = \left(\Tilde{z}_{1,i}^T\Tilde{z}_{2,i}\right) \cdot \left({z}_{1,i}^T{z}_{2,i}\right)
\end{equation}
then in the case of linear encoding functions, $f_{\theta_1}$ and $f_{\theta_2}$, for the common structures, the gradient map of $s_i$ with respect to $x_{1,j}$ (as an example, as the same applies to $x_{2,j}$) is
\begin{equation}
\label{common_grad_append}
\begin{aligned}
\frac{\partial s_j}{\partial x_{1,j}^T} = & \left(\frac{\partial s_j}{\partial x_{1,j}}\right)^T\\
= &  \frac{{z}_{1,j}^T{z}_{2,j}}{||z_{1,j}||_2} \left[\Tilde{z}_{2,j}^TU^T -\left(\Tilde{z}_{1,j}^T\Tilde{z}_{2,j}\right)\Tilde{z}_{1,j}^T U^T\right]\\
&\quad \quad \quad + \left(\Tilde{z}_{1,j}^T\Tilde{z}_{2,j}\right) z_{2,j}^TU^T
\end{aligned}
\end{equation}
\end{prop}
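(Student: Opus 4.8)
The plan is to exploit the product structure of the score. Writing $\alpha_j = \Tilde{z}_{1,j}^T\Tilde{z}_{2,j}$ for the cosine similarity and $\beta_j = z_{1,j}^T z_{2,j}$ for the raw inner product, we have $s_j = \alpha_j\beta_j$, so by the product rule $\partial s_j/\partial x_{1,j}^T = \beta_j\,(\partial\alpha_j/\partial x_{1,j}^T) + \alpha_j\,(\partial\beta_j/\partial x_{1,j}^T)$. Because the encoder is linear, $z_{1,j} = U^T x_{1,j}$ depends on $x_{1,j}$ through the constant Jacobian $U$, while $z_{2,j} = V^T x_{2,j}$ is constant in $x_{1,j}$; this decouples the two factors and reduces everything to differentiating with respect to $z_{1,j}$ and then post-multiplying by $U^T$ via the chain rule. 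All intermediate gradients are taken as row vectors so that the final result matches the transpose convention of the statement.

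First I would handle the easy factor $\beta_j = z_{1,j}^T z_{2,j}$. Since $z_{2,j}$ is independent of $x_{1,j}$, the gradient with respect to $z_{1,j}$ is simply $z_{2,j}$, and pushing through the linear map gives $\partial\beta_j/\partial x_{1,j}^T = z_{2,j}^T U^T$. Multiplied by $\alpha_j$, this supplies the last term $(\Tilde{z}_{1,j}^T\Tilde{z}_{2,j})\,z_{2,j}^T U^T$ of the claimed expression.

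The main work---and the step I expect to be the most error-prone---is differentiating the cosine-similarity factor $\alpha_j = (z_{1,j}^T z_{2,j})/(\|z_{1,j}\|_2\,\|z_{2,j}\|_2)$, which requires differentiating the normalization map $z\mapsto z/\|z\|_2$. The key identity is that the Jacobian of this map is $\|z\|_2^{-1}(\mathbb{I} - \Tilde{z}\Tilde{z}^T)$, i.e. the projection onto the subspace orthogonal to $z$ rescaled by $1/\|z\|_2$. Applying this (or, equivalently, the quotient rule directly on $\alpha_j$) yields $\partial\alpha_j/\partial z_{1,j} = \|z_{1,j}\|_2^{-1}\bigl(\Tilde{z}_{2,j} - \alpha_j\,\Tilde{z}_{1,j}\bigr)$; the projection structure is exactly where the subtracted $(\Tilde{z}_{1,j}^T\Tilde{z}_{2,j})\Tilde{z}_{1,j}^T$ term inside the bracket of the proposition originates. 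Care must be taken to keep the two distinct normalizations ($\|z_{1,j}\|_2$ versus $\|z_{2,j}\|_2$) separate and to track the transpose conventions so that the output is a genuine row vector.

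Finally I would assemble the pieces: multiply $\partial\alpha_j/\partial z_{1,j}$ on the right by $U^T$, scale by $\beta_j = z_{1,j}^T z_{2,j}$, and add the $\alpha_j\,z_{2,j}^T U^T$ term. Substituting $\alpha_j = \Tilde{z}_{1,j}^T\Tilde{z}_{2,j}$ and $\beta_j = z_{1,j}^T z_{2,j}$ and factoring the common coefficient $z_{1,j}^T z_{2,j}/\|z_{1,j}\|_2$ in front of the bracket reproduces Eq.~\ref{common_grad_append} verbatim. The derivation for $x_{2,j}$ is identical by symmetry, swapping the roles of the two views and of $U$ and $V$.
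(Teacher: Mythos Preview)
Your proposal is correct and follows essentially the same route as the paper: apply the product rule to $s_j=\alpha_j\beta_j$, use that $z_{2,j}$ is constant in $x_{1,j}$ and that $z_{1,j}=U^Tx_{1,j}$ to chain through $U^T$, and differentiate the cosine-similarity factor. The only cosmetic difference is that you invoke the standard Jacobian $\|z\|_2^{-1}(\mathbb{I}-\Tilde z\Tilde z^T)$ of the normalization map directly, whereas the paper re-derives the same thing by writing $\alpha_j=(z_{1,j}^T\Tilde z_{2,j})/\|z_{1,j}\|_2$ and applying the quotient/chain rule to $1/\|z_{1,j}\|_2$; the resulting expressions coincide term for term.
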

\begin{proof}
First, we note that only ${z}_{1,j}$ and $\Tilde{z}_{1,j}$ depend on $x_{1,j}$. Applying the product rule results in 
\begin{equation}\label{beginning}
\begin{aligned}
\frac{\partial s_j}{\partial x_{1,j}} &= \left(\frac{\partial\left(\Tilde{z}^T_{1,j} \Tilde{z}_{2,j}\right)}{\partial x_{1,j}}\right) \cdot \left({z}_{1,j}^T{z}_{2,j}\right) +\\& \ \ \ \ \ \ \ \ \ \ \ \ \ \ \ \left(\Tilde{z}_{1,j}^T\Tilde{z}_{2,j}\right) \cdot \left(\frac{\partial\left({z}^T_{1,j}{z}_{2,j}\right)}{\partial x_{1,j}}\right)\\
\end{aligned}
\end{equation}
where the Jacobian matrices $\left(\partial z_{1,j}/\partial x_{1,j}\right), \left(\partial \Tilde{z}_{1,j}/\partial x_{1,j}\right) \in R^{k\times d_1}$. We note that since $z^T_{1,j} = x^T_{1,j} U$, we have
$$\frac{\partial\left({z}^T_{1,j}{z}_{2,j}\right)}{\partial x_{1,j}} = \frac{\partial \left(x^T_{1,j}U{z}_{2,j}\right)}{\partial x_{1,j}} = U{z}_{2,j}$$
Since $\Tilde{z}_{1,j} = z_{1,j}/||z_{1,j}||_2$, applying another product rule results in
\begin{equation*}
\begin{aligned}
\frac{\partial\left(\Tilde{z}^T_{1,j} \Tilde{z}_{2,j}\right)}{\partial x_{1,j}} &= \frac{\partial\left[\left({z}^T_{1,j} \Tilde{z}_{2,j}\right)/||z_{1,j}||_2\right]}{\partial x_{1,j}}\\
&= \frac{\partial\left({z}^T_{1,j} \Tilde{z}_{2,j}\right)/\partial x_{1,j}}{||z_{1,j}||_2} + \left({z}^T_{1,j} \Tilde{z}_{2,j}\right) \frac{\partial \left(1/||z_{1,j}||_2\right)}{\partial x_{1,j}}
\end{aligned}
\end{equation*}
Similar to before, for the first term above we have
$$\frac{\partial\left({z}^T_{1,j} \Tilde{z}_{2,j}\right)/\partial x_{1,j}}{||z_{1,j}||_2} = \frac{U \Tilde{z}_{2,j}}{||z_{1,j}||_2}$$

\noindent For the second term, applying the chain rule results in 
\begin{equation*}
\begin{aligned}
\frac{\partial \left(1/||z_{1,j}||_2\right)}{\partial x_{1,j}} &= - \frac{1}{||z_{1,j}||_2^2} \frac{\partial||z_{1,j}||_2}{\partial x_{1,j}}\\
&= - \frac{1}{2||z_{1,j}||_2^3}\frac{\partial||z_{1,j}||_2^2}{\partial x_{1,j}}\\
&= - \frac{1}{||z_{1,j}||_2^3}\cdot x_{1,j}^T UU^T
\end{aligned}
\end{equation*}
where we used the fact that $$||z_{1,j}||_2^2 = z_{1,j}^T z_{1,j} = x_{1,j}^TUU^T x_{1,j}$$
and 
$$ \frac{\partial\left(x_{1,j}^TUU^T x_{1,j}\right)}{\partial x_{1,j}} = 2x_{1,j}^T UU^T$$
We thus have
\begin{equation*}
\begin{aligned}
\frac{\partial\left(\Tilde{z}^T_{1,j} \Tilde{z}_{2,j}\right)}{\partial x_{1,j}} &= \frac{U \Tilde{z}_{2,j}}{||z_{1,j}||_2} - \frac{\left({z}^T_{1,j} \Tilde{z}_{2,j}\right)}{||z_{1,j}||_2^3}\cdot x_{1,j}^T UU^T\\
&= \frac{U \Tilde{z}_{2,j}}{||z_{1,j}||_2} -\frac{\left(\Tilde{z}^T_{1,j} \Tilde{z}_{2,j}\right)}{||z_{1,j}||_2}\cdot \Tilde{z}^T_{1,j}U^T
\end{aligned}
\end{equation*}
Plugging the results for $\left(\partial z_{1,j}/\partial x_{1,j}\right)$ and $\left(\partial \Tilde{z}_{1,j}/\partial x_{1,j}\right)$ back into Eq.~\ref{beginning} results in Eq.~\ref{common_grad_append}. 
\end{proof}

\begin{prop}
Define the score for the individual structure as 
$$r_{i,j} = \frac{1}{2}||P^{\perp}h_{i,j}||_2^2\ ,\ \ \ \ P^{\perp} = \left(\mathbb{I}-\sum_{i=1}^2 z_{i,j}^*(z_{i,j}^*)^T\right)$$
where $\{z_{1,j}^*,z_{2,j}^*\}$ denotes the orthogonalized version of $\{z_{1,j},z_{2,j}\}$ (so that $P^{\perp}$ is a proper orthogonal projector). In the case of linear encoding functions, $g_{\kappa_1}$ and $g_{\kappa_2}$, for the individual structures, the gradient map of $r_{1,j}$ with respect to $x_{1,j}$ (as an example, as the same applies to that of $r_{2,j}$ with respect to $x_{2,j}$) is
$$\frac{\partial r_{1,j}}{\partial x^T_{1,j}} = \left(\frac{\partial r_{1,j}}{\partial x_{1,j}}\right)^T =  x_{1,j}^T W^{\perp}(W^{\perp})^T$$
where $W^{\perp} = W(P^{\perp})^T$.
\end{prop}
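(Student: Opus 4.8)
The plan is to reduce the score $r_{1,j}$ to a single quadratic form in $x_{1,j}$ and then differentiate. The first step exploits that $P^{\perp}$ is, by construction, a proper orthogonal projector: since $\{z_{1,j}^*,z_{2,j}^*\}$ is the orthonormalized common pair, $P^{\perp}$ is symmetric and idempotent, so $(P^{\perp})^T P^{\perp} = P^{\perp}$. Expanding the squared norm then collapses the score to $r_{1,j} = \tfrac{1}{2}\, h_{1,j}^T (P^{\perp})^T P^{\perp} h_{1,j} = \tfrac{1}{2}\, h_{1,j}^T P^{\perp} h_{1,j}$, which removes the need to ever expand $P^{\perp}$ into its rank-one pieces.

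Next I would substitute the linear parameterization. Mirroring the common case in which $z_{1,j}^T = x_{1,j}^T U$, the individual encoder $g_W$ gives $h_{1,j}^T = x_{1,j}^T W$, i.e. $h_{1,j} = W^T x_{1,j}$. Plugging this in yields $r_{1,j} = \tfrac{1}{2}\, x_{1,j}^T \big(W P^{\perp} W^T\big) x_{1,j}$, a quadratic form whose matrix $M := W P^{\perp} W^T$ is symmetric (again because $P^{\perp}$ is). Applying the standard quadratic-form rule $\partial (x^T M x)/\partial x = 2 M x$ for symmetric $M$, the factor of $\tfrac{1}{2}$ cancels, and transposing gives $\partial r_{1,j}/\partial x_{1,j}^T = x_{1,j}^T\, W P^{\perp} W^T$.

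The final step is the factorization that matches the claimed form. With $W^{\perp} = W (P^{\perp})^T = W P^{\perp}$, idempotence and symmetry give $W^{\perp}(W^{\perp})^T = W P^{\perp} (P^{\perp})^T W^T = W P^{\perp} W^T$, which coincides with the matrix from the previous step, establishing $\partial r_{1,j}/\partial x_{1,j}^T = x_{1,j}^T\, W^{\perp}(W^{\perp})^T$. This gives exactly the interpretation advertised in Sec.~\ref{indi_linear_case_proof}: the gradient map first projects each column of the basis $W$ through $P^{\perp}$ and then projects the sample onto the subspace spanned by the resulting individualized basis $W^{\perp}$.

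The one genuine subtlety, and where I expect the only real care is needed, is that $P^{\perp}$ itself depends on $x_{1,j}$ through the common representation $z_{1,j}^* \propto U^T x_{1,j}$, so a fully honest total derivative would add terms of the form $\tfrac{1}{2}\, h_{1,j}^T (\partial P^{\perp}/\partial x_{1,j})\, h_{1,j}$. The clean result holds precisely because the common component is \emph{frozen} when the individual component is learned, and the projector is treated as a fixed (detached) matrix during the gradient-map computation. I would therefore state this assumption explicitly before taking the derivative; once $P^{\perp}$ is held constant, the remainder is just the symmetric quadratic-form gradient together with the projector identities $(P^{\perp})^T = P^{\perp}$ and $(P^{\perp})^2 = P^{\perp}$, and no further estimates are required.
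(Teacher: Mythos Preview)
Your proposal is correct and follows essentially the same route as the paper: expand the squared norm, substitute $h_{1,j}^T = x_{1,j}^T W$, and differentiate the resulting quadratic form. The paper's version is even terser and does not invoke the projector identities, since with $W^{\perp} = W(P^{\perp})^T$ one has $W^{\perp}(W^{\perp})^T = W(P^{\perp})^T P^{\perp} W^T$ matching the quadratic-form matrix directly; your explicit remark that $P^{\perp}$ must be treated as detached from $x_{1,j}$ is a genuine clarification the paper leaves implicit.
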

\begin{proof}
We first write $r_{1,j}$ as 
$$r_{1,j} = \frac{1}{2}||P^{\perp}h_{1,j}||_2^2 = \frac{1}{2}h^T_{1,j}\left(P^{\perp}\right)^T P^{\perp}h_{1,j}$$
Since $h^T_{1,j} = x_{1,j}^TW$, we thus have 
$$\frac{\partial r_{1,j}}{\partial x^T_{1,j}} = \left(\frac{\partial r_{1,j}}{\partial x_{1,j}}\right)^T= x_{1,j}^T W^{\perp}(W^{\perp})^T$$
where $W^{\perp} = W(P^{\perp})^T$.
\end{proof}

\begin{lemma}
Define the squared Mahalanobis norm of a vector $x$ with respect to a positive definite matrix $A$ as $||x||^2_A = x^T A x$. Learning a $w^{1,i}$ that results in $(h^{1,i})^Tz^{1,j}/n = 0$ is equivalent to searching for a direction $w^{1,i}$ that is orthogonal to $u^{1,j}$ in a Mahalanobis sense, i.e. $||w^{1,i}+u^{1,j}||^2_{\Sigma} = ||w^{1,i}||^2_{\Sigma}+||u^{1,j}||^2_{\Sigma}$ where $\Sigma = X_1^T X_1/n$. 
\end{lemma}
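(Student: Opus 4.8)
The plan is to unwind both sides of the claimed equivalence and show they reduce to the same condition on $w^{1,i}$ and $u^{1,j}$. First I would observe that $h^{1,i} = X_1 w^{1,i}$ and $z^{1,j} = X_1 u^{1,j}$ by definition of the linear encoders, so the left-hand condition expands as
\begin{equation*}
\frac{(h^{1,i})^T z^{1,j}}{n} = \frac{(X_1 w^{1,i})^T (X_1 u^{1,j})}{n} = (w^{1,i})^T \left(\frac{X_1^T X_1}{n}\right) u^{1,j} = (w^{1,i})^T \Sigma\, u^{1,j},
\end{equation*}
using $\Sigma = X_1^T X_1/n$. Hence the constraint $(h^{1,i})^T z^{1,j}/n = 0$ is exactly $(w^{1,i})^T \Sigma\, u^{1,j} = 0$, i.e. $w^{1,i}$ and $u^{1,j}$ have vanishing inner product under the $\Sigma$-weighted (Mahalanobis) bilinear form.

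Next I would expand the right-hand side of the stated equivalence using the definition $\|x\|_\Sigma^2 = x^T \Sigma x$. Writing out the squared norm of the sum,
\begin{equation*}
\|w^{1,i}+u^{1,j}\|_\Sigma^2 = (w^{1,i}+u^{1,j})^T \Sigma (w^{1,i}+u^{1,j}) = \|w^{1,i}\|_\Sigma^2 + \|u^{1,j}\|_\Sigma^2 + 2\,(w^{1,i})^T \Sigma\, u^{1,j},
\end{equation*}
where the cross term is symmetric because $\Sigma$ is (symmetric) positive definite, so $(w^{1,i})^T\Sigma u^{1,j} = (u^{1,j})^T\Sigma w^{1,i}$. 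Therefore the ``Pythagorean'' identity $\|w^{1,i}+u^{1,j}\|_\Sigma^2 = \|w^{1,i}\|_\Sigma^2 + \|u^{1,j}\|_\Sigma^2$ holds if and only if the cross term vanishes, i.e. if and only if $(w^{1,i})^T \Sigma\, u^{1,j} = 0$.

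Combining the two computations closes the argument: both the whitened-correlation constraint and the Mahalanobis-orthogonality identity are equivalent to the single condition $(w^{1,i})^T \Sigma\, u^{1,j} = 0$, hence equivalent to each other. The only subtlety worth flagging — and I do not expect it to be a genuine obstacle — is the remark in the paper that $\hat{Z}_1$ and $\hat{H}_1$ are column-whitened versions of $Z_1$ and $H_1$; I would invoke the observation already made in the text that whitening rescales columns without changing their directions, so the zero-correlation constraint on the whitened quantities is equivalent to $\Sigma$-orthogonality of the underlying direction vectors $w^{1,i}$ and $u^{1,j}$. Since the identification of the cross term is purely an algebraic expansion relying only on the symmetry and positive-definiteness of $\Sigma$, there is no real hard step; the proof is a direct two-sided computation rather than anything requiring perturbation theory or a change of basis.
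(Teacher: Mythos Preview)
Your proposal is correct and follows essentially the same argument as the paper: expand $(h^{1,i})^T z^{1,j}/n$ as $(w^{1,i})^T\Sigma u^{1,j}$, expand $\|w^{1,i}+u^{1,j}\|_\Sigma^2$ to reveal the cross term $2(w^{1,i})^T\Sigma u^{1,j}$, and conclude both conditions are equivalent to the vanishing of that cross term. Your added remarks on the symmetry of $\Sigma$ and the whitening subtlety are fine but not strictly needed for the lemma as stated.
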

\begin{proof}
We first rewrite
\begin{equation*}
\begin{split}
\frac{1}{n}(h^{1,i})^Tz^{1,j} &= \frac{1}{n}(w^{1,i})^TX_1^TX_1u^{1,j}\\
&= (w^{1,i})^T\Sigma u^{1,j}\\
\end{split}
\end{equation*}
Since $$||w^{1,i}+u^{1,j}||^2_{\Sigma} = |w^{1,i}||^2_{\Sigma} + ||u^{1,j}||^2_{\Sigma} + 2(w^{1,i})^T\Sigma u^{1,j}$$
enforcing $(w^{1,i})^T\Sigma u^{1,j} = 0$ is equivalent to finding a $w^{1,i}$ such that $||w^{1,i}+u^{1,j}||^2_{\Sigma} = |w^{1,i}||^2_{\Sigma} + ||u^{1,j}||^2_{\Sigma}$.
\end{proof}

\subsection{Additional Experiments}
\label{more_experiment}
\subsubsection{Alternative Formulation of Deep CCA---Cross-modality Recognition}
 \begin{table}[h]
\begin{center}
\begin{tabular}[!t]{c| c c c c c }
 \hline
 $k$ & 50 & 100 & 200 & 500 & 1000\\
 \hline
 Upper Bound & 50 & 100 & 200 & 500 & 1000\\
 \hline\hline
 CCA \cite{Hardoon2004CanonicalCA} & $\sim$ 70 & $\sim$ 70 & $\sim$ 70 & $\sim$ 70 & - \\ 
 
 Deep CCA \cite{Andrew2013DeepCC} & $\sim$ 72 & $\sim 75$ & $\sim 77$ & $\sim 79$ & - \\

 DCCAE \cite{Wang2016OnDM} & $\sim$ 71 & 75 & 76 & 79 & -\\

  CorrNet \cite{Chandar2015CorrelationalNN} & $\sim$ 80 & - & - & - & -\\

 SDCCA \cite{Wang2015StochasticOF} & $\sim$ 80 &$\sim$ 80 & $\sim$ 81 & $\sim$ 82 & -\\

 Soft CCA \cite{Chang2017ScalableAE} & $\sim$ 82 & $\sim$ 84 & $\sim$ 85 & $\sim$ 86 & $\sim$ 87\\ 

  \MUCMM\  & \textbf{87.1} & \textbf{88.3} & \textbf{88.8} & \textbf{90.1} & \textbf{90.9}\\ 
\hline
\end{tabular}
\caption{Cross-modality recognition accuracy (in \%). We obtain the results for other methods from Chang \etal \cite{Chang2017ScalableAE}, where the results are not exact (hence we use $\sim$ to indicate the results are close to the numbers listed) as Chang \etal only included a figurative illustration of the results instead of reporting the actual numerical numbers. The approximate results listed above are charitable in the sense that we deliberately tried to ``round up'' the result whenever it is not clear what the exact number is in their figure.}
\label{mnist_reg}
\end{center}
\vspace{-5mm}
\end{table}
We also provide results for the cross-modality recognition task in addition to the cross-modality correlation task depicted in Sec.~5.1 in the main manuscript. Following the experimental setups of CorrNet \cite{Chandar2015CorrelationalNN} and Soft CCA \cite{Chang2017ScalableAE}, for the cross-modality recognition task, we use the MNIST dataset. MNIST \cite{LeCun2005TheMD} consists of a training set of 60000 and a test set of 10000 images of 28$\times$28 hand-written digits. We treat the left and the right halves of the images as the two modalities, and train the common component of \MUCMM\ on the training set. Once the common component is trained, on the test set, the common representations for both modalities (the left and the right halves of the images) are extracted. The common representations for one of the modalities (e.g. left) are used to learn a linear support vector machine (SVM) \cite{Noble2006WhatIA, Pedregosa2011ScikitlearnML} to classify the digits. The model performance is measured by the classification accuracy of the same digits obtained from applying this SVM on the common representations of the other modality (i.e. right). We report the result by doing 5-fold cross-validation on the test set. As one can imagine, the more signals of the common structure the model is able to extract between the two halves of the images, the better the SVM will perform at test time. The results are in Tab.~\ref{mnist_reg}. We expound on our findings next. 

As one can see, \MUCMM\ again outperforms all other methods for all choices of $k$. For the the cross-modality recognition task, the advantage of our formulation is especially clear as the $L_2$ distance objective proposed in Soft CCA \cite{Chang2017ScalableAE} can potentially overshadow the decorrelation regularizations that are to prevent learning redundant information, making the learned common representations less generalizable. 

\subsection{Experimental Setup and Details}
\label{experimental_setup}
This section provides a detailed description of the experimental setups, such as the train/test splits, the chosen network architectures, and the choices of learning rate and optimizer, for the experiments conducted. We describe the architecture of {\MUCMM} in terms of its common component and individual component. We adopt the following abbreviations for some basic network layers
\begin{itemize}
    \item \FL($d_i,d_o,f$) denotes a fully-connected layer with $d_i$ input units, $d_o$ output units, and activation function $f$.
    \item \Conv$\left(c_i, c_o, k_1, f, \batchtwod, O(k_2,s) \right)$ denotes a convolution layer with $c_i$ input channels, $c_o$ output channels, kernel size $k_1$, activation function $f$, the 2D batch norm operation, and the pooling operation $O(k_2,s)$ with another kernel size $k_2$ and stride $s$.
\end{itemize} 

\subsubsection{A Note on the Gradient/Saliency Maps Generation Procedure}
We give a detailed description of how the gradient maps are generated. Given a pair of modalities (of data), we first use the feature extractors to extract feature representations of the pair of data. We note again that we do \emph{not} fine-tune the feature extractor. We next learn the common component (as described in Sec.~3.1) to derive the pair of latent representations (which we will refer to as the common representations) for the common structure. Further, we use the common component, which we do \emph{not} fine-tune once it is trained, to learn the individual component to obtain the pair of representations for the individual structures (which we will refer to as the individual representations). Once we have the common and individual representations,  we calculate the proposed scores detailed in Sec.~3.1 and Sec.~3.2 in the main manuscript for the common and individual structures, respectively. We then use GradCAM~\cite{Adebayo2018SanityCF} along with the computed scores to compute the saliency maps (in the case where the input modalities are images) and visualize those maps. 

\subsubsection{Alternative Formation of Deep CCA}
As stated in the main manuscript, we train the common component of \MUCMM\ on the training set of MNIST and test on the test set. When a hyperparameter-search is needed, we randomly split the training set into 50000 and 10000 images, and use the 50000 images for training and 10000 images for validation/hyperparameters-search.

We use the \texttt{Adam} optimizer with a constant learning rate of 0.0002 for optimization. We train with a batch size of 2048 pairs of halved images for 1000 epochs. We use the following network architecture for the common component of \MUCMM:

\begin{center}
 \label{tab:cbcs_a}
 \begin{tabular}{||c||} 
 \hline
 \textbf{Modality 1}  \\ [0.5ex] 
 \hline\hline
  \FL(392,500,\ReLU) \\ 
 \hline
  \FL(500,300,\ReLU) \\ 
 \hline
  \FL(300,$k$,---) \\
 \hline
  \batchoned($k$)\\
 \hline
\end{tabular}
\quad
 \begin{tabular}{||c||} 
 \hline
 \textbf{Modality 2}  \\ [0.5ex] 
 \hline\hline
  \FL(392,500,\ReLU) \\ 
 \hline
  \FL(500,300,\ReLU) \\ 
 \hline
  \FL(300,$k$,---) \\
 \hline
  \batchoned($k$)\\
 \hline
\end{tabular}
\end{center}
where $k$ denotes the latent dimension in Tab.~\ref{mnist_reg}.

\subsubsection{Object of Interest (OOI) Visualization}
We train \MUCMM\ on the training set of the COCO dataset and report results on the validation set. The training set consists of 117040 images, each of which is associated with multiple labels (that indicate the objects in that image). At each epoch during training, we randomly sample a label from the list of labels associated with each training image (therefore the same image might be trained with different labels at different epochs) and construct the sentence ``\verb|the image contains {label}|'' as the input text.  We use a pre-trained ResNet50 \cite{He2015DeepRL} and a pre-trained BERT \cite{Devlin2019BERTPO} to extract the feature representations for the images and the texts, respectively. For visualizations (after learning the model), we use GradCAM~\cite{Adebayo2018SanityCF} to visualize the last convolutional layer of the ResNet50 using the introduced common score. 

We use the \texttt{Adam} optimizer with a constant learning rate of 0.001 for optimization. We train with a batch size of 2048 pairs of images and texts for 200 epochs. We use the following network architecture for the common component of \MUCMM:
\begin{center}
 \label{tab:cbcs_a}
 \begin{tabular}{||c||} 
 \hline
 \textbf{Image Modality}  \\ [0.5ex] 
 \hline\hline
 Pre-trained Resnet 50\\
  \hline
  \FL(2048,1024,\ReLU) \\ 
 \hline
  \FL(1024,2048,\ReLU) \\ 
 \hline
  \FL(2048,1024,\ReLU) \\
 \hline
  \FL(1024,256,\ReLU) \\
 \hline
   \FL(256,64,---) \\
 \hline
  \batchoned(64)\\
 \hline
\end{tabular}
\quad
 \begin{tabular}{||c||} 
 \hline
 \textbf{Text Modality}  \\ [0.5ex] 
 \hline\hline
  Pre-trained BERT\\
  \hline
  \FL(768,1024,\ReLU) \\ 
 \hline
  \FL(1024,2048,\ReLU) \\ 
 \hline
  \FL(2048,1024,\ReLU) \\
 \hline
  \FL(1024,256,\ReLU) \\
 \hline
   \FL(256,64,---) \\
 \hline
  \batchoned(64)\\
 \hline
\end{tabular}
\end{center}

\subsubsection{Pair of Images}
As described in the main manuscript, we generate each pair of images by first randomly sampling a class from the available 80 fine-grained classes. Given the sampled class, we then randomly sample two images belonging to this class as the pair of images. We generate 60000 such image pairs for training and 5000 for testing. We choose the pre-trained original ResNet50 \cite{He2015DeepRL} and the CLIP ResNets50 \cite{Radford2021LearningTV} as the feature extractors for the paired images. For visualizations (after learning the model), we use GradCAM~\cite{Adebayo2018SanityCF} to visualize the last convolutional layer of the ResNet50 using the introduced common and individual scores. 

We use the \texttt{Adam} optimizer with a constant learning rate of 0.001 for optimization. We train with a batch size of 2048 pairs of images for 200 epochs. We use the following network architecture for the common component of \MUCMM:
\begin{center}
 \label{tab:cbcs_a}
 \begin{tabular}{||c||} 
 \hline
 \textbf{Image Modality 1}  \\ [0.5ex] 
 \hline\hline
  Regu Resnet50\\
  \hline
  \FL(2048,1024,\ReLU) \\ 
 \hline
  \FL(1024,2048,\ReLU) \\ 
 \hline
  \FL(2048,1024,\ReLU) \\
 \hline
  \FL(1024,256,\ReLU) \\
 \hline
   \FL(256,64,---) \\
 \hline
  \batchoned(64)\\
 \hline
\end{tabular}
\quad
 \begin{tabular}{||c||} 
 \hline
 \textbf{Image Modality 2}  \\ [0.5ex] 
 \hline\hline
    CLIP Resnet50\\
  \hline
  \FL(1024,1024,\ReLU) \\ 
 \hline
  \FL(1024,2048,\ReLU) \\ 
 \hline
  \FL(2048,1024,\ReLU) \\
 \hline
  \FL(1024,256,\ReLU) \\
 \hline
   \FL(256,64,---) \\
 \hline
  \batchoned(64)\\
 \hline
\end{tabular}
\end{center}
and the following for the individual component of \MUCMM:
\begin{center}
 \label{tab:cbcs_a}
 \begin{tabular}{||c||} 
 \hline
 \textbf{Image Modality 1}  \\ [0.5ex] 
 \hline\hline
  Regu Resnet50\\
  \hline
  \FL(2048,1024,\ReLU) \\ 
 \hline
  \FL(1024,2048,\ReLU) \\ 
 \hline
  \FL(2048,1024,\ReLU) \\
 \hline
  \FL(1024,256,\ReLU) \\
 \hline
   \FL(256,64,---) \\
 \hline
  \batchoned(64)\\
 \hline
\end{tabular}
\quad
 \begin{tabular}{||c||} 
 \hline
 \textbf{Image Modality 2}  \\ [0.5ex] 
 \hline\hline
    CLIP Resnet50\\
  \hline
  \FL(1024,1024,\ReLU) \\ 
 \hline
  \FL(1024,2048,\ReLU) \\ 
 \hline
  \FL(2048,1024,\ReLU) \\
 \hline
  \FL(1024,256,\ReLU) \\
 \hline
   \FL(256,64,---) \\
 \hline
  \batchoned(64)\\
 \hline
\end{tabular}
\end{center}

\clearpage
\subsection{More Visualizations}
\label{more_plots}
We provide additional visualizations to demonstrate the efficacy of \MUCMM.
\subsubsection{Object of Interest (OOI) Visualization}
We provide two additional visualization examples to demonstrate that by providing \MUCMM\  with a text description of the object of interest (among potentially multiple objects) in an image, \MUCMM\ is able to capture the described object in that image. As one can see, \MUCMM\ is able to locate the object described by the paired text in the image. Furthermore, \MUCMM\ can adapt to different text prompts at \emph{test} time and alter the object of interest highlighted in the same image correspondingly. 

\begin{figure}[!h]
\begin{subfigure}[t]{0.23\textwidth}
    \includegraphics[width=\textwidth]{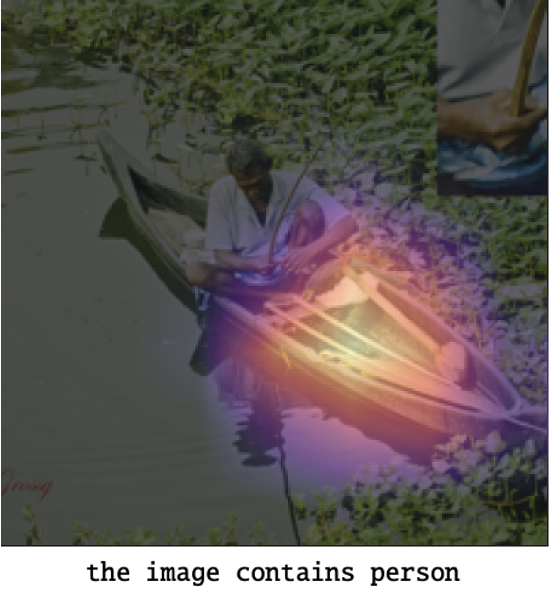}
    \includegraphics[width=\linewidth]{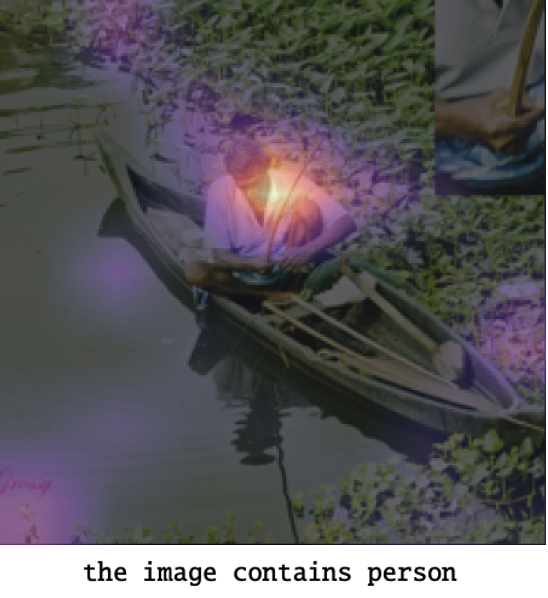}
\end{subfigure}
\smallskip 
\begin{subfigure}[t]{0.23\textwidth}
    \includegraphics[width=\linewidth]{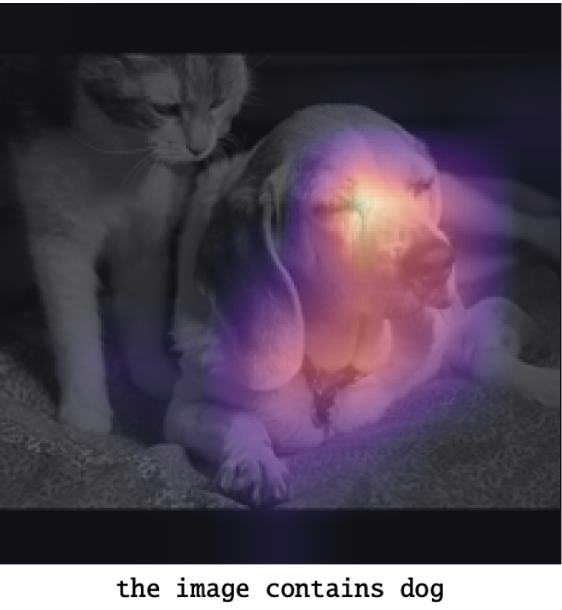}

    \includegraphics[width=\linewidth]{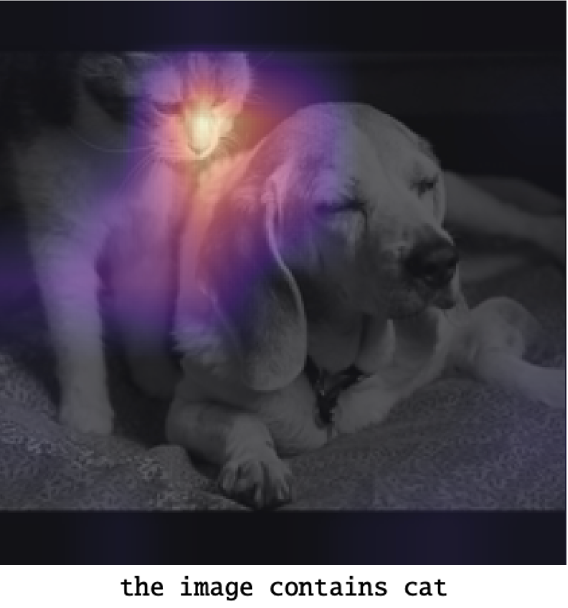}
\end{subfigure}
\smallskip 
\begin{subfigure}[t]{0.23\textwidth}
    \includegraphics[width=\linewidth]{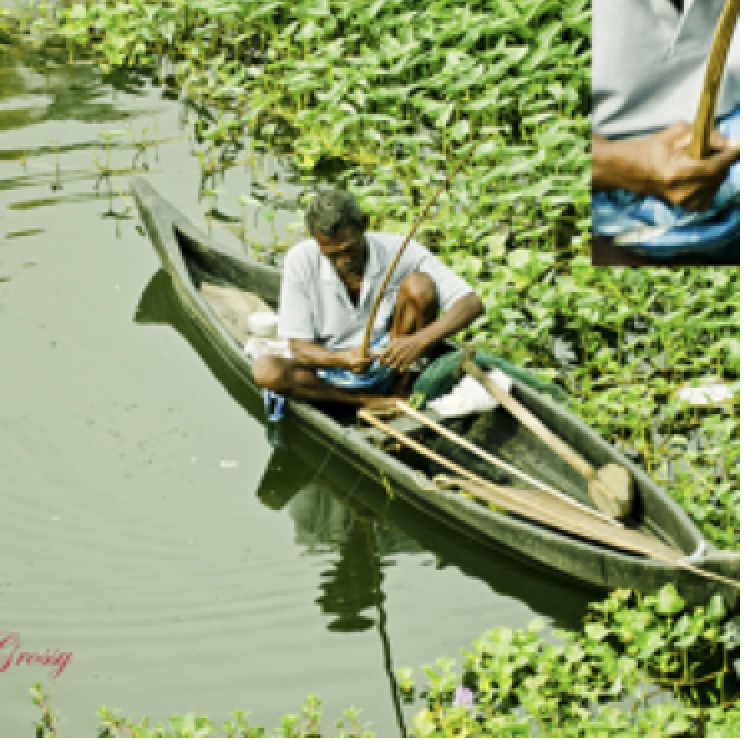}
    \caption{Example 1}
\end{subfigure}\hspace{\fill} 
\begin{subfigure}[t]{0.23\textwidth}
    \includegraphics[width=\linewidth]{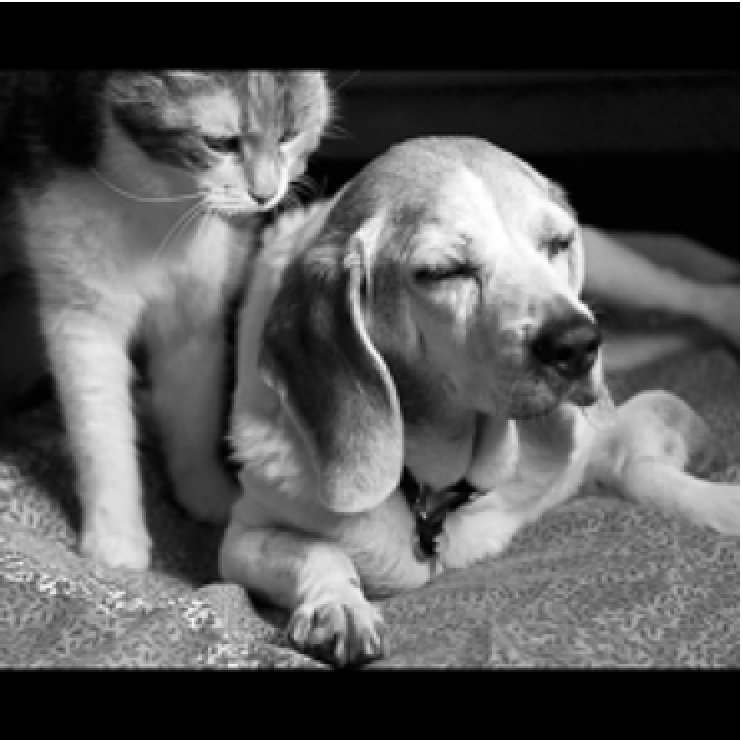}
    \caption{Example 2}
\end{subfigure}
\caption{Two examples (columns) of OOI visualizations at test time are given. The first two rows contain the saliency maps for the example images given different text prompts, where the texts below the images are the prompts. The third row contains the original input images.}
\vspace{-3mm}
\label{coco_roi_append}
\end{figure}

\subsubsection{Pair of Images}
We provide three additional visualization examples to show that \MUCMM\ is able to capture the common and individual structures between two image modalities (please refer to Fig.~\ref{paired_coco_ims_append}). 
\begin{figure*}
\setlength{\tabcolsep}{0.002\textwidth}
\begin{tabular}{@{}cccccc@{}}
\begin{subfigure}[t]{0.16\textwidth}
\includegraphics[width=\textwidth,valign=T]{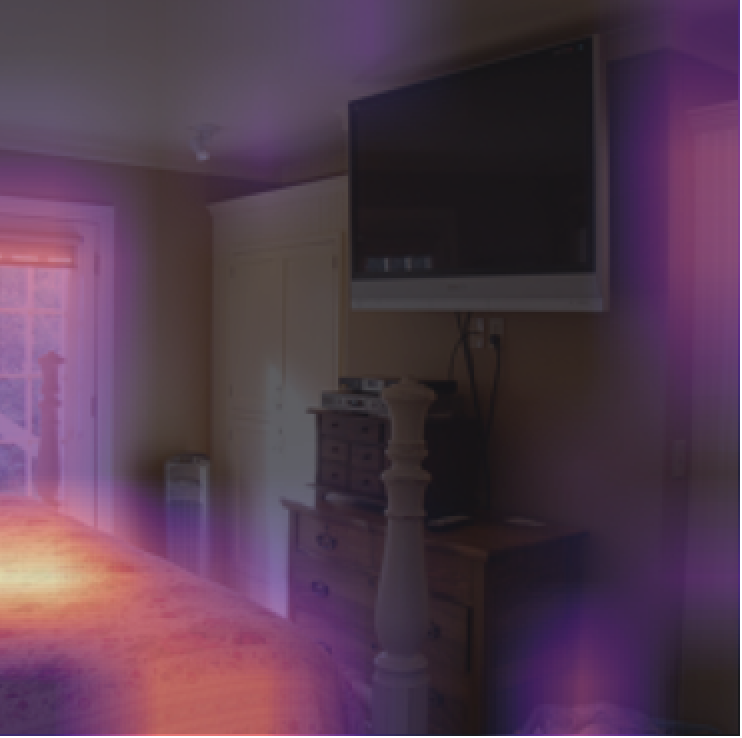}
\end{subfigure} &
\begin{subfigure}[t]{0.16\textwidth}
\includegraphics[width=\textwidth,valign=T]{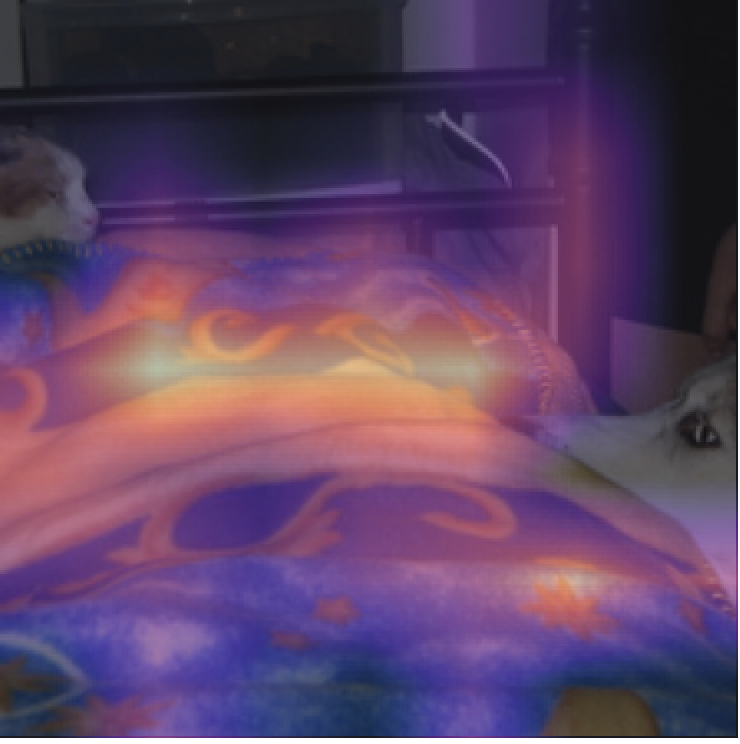}
\end{subfigure} &
\begin{subfigure}[t]{0.16\textwidth}
\includegraphics[width=\textwidth,valign=T]{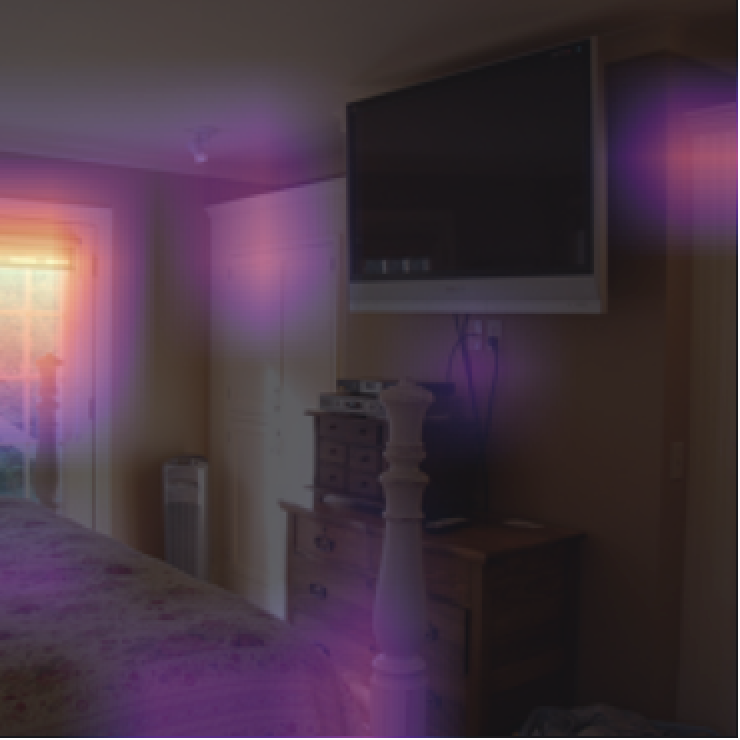}
\end{subfigure}&
\begin{subfigure}[t]{0.16\textwidth}
\includegraphics[width=\textwidth,valign=T]{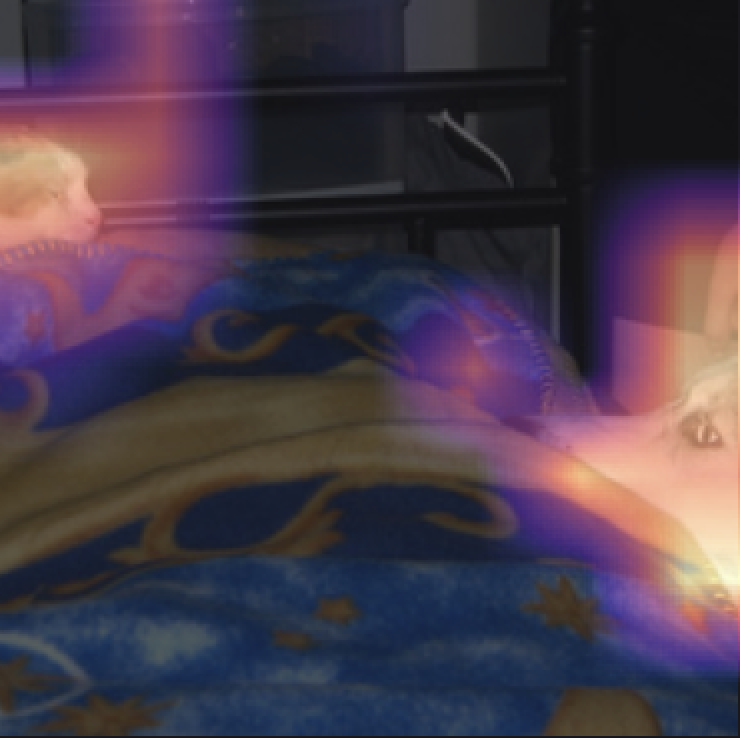}
\end{subfigure}&
\begin{subfigure}[t]{0.16\textwidth}
\includegraphics[width=\textwidth,valign=T]{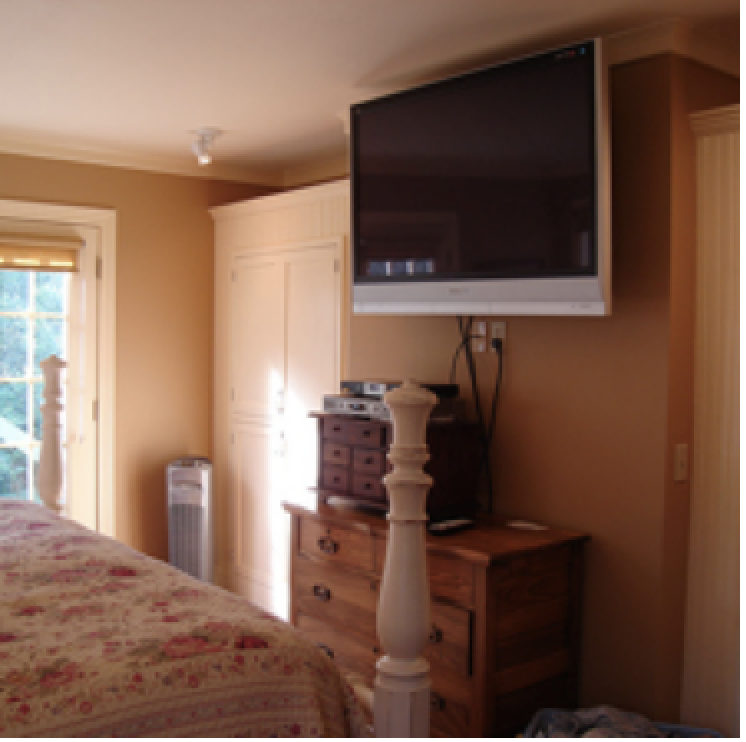}
\end{subfigure}&
\begin{subfigure}[t]{0.16\textwidth}
\includegraphics[width=\textwidth,valign=T]{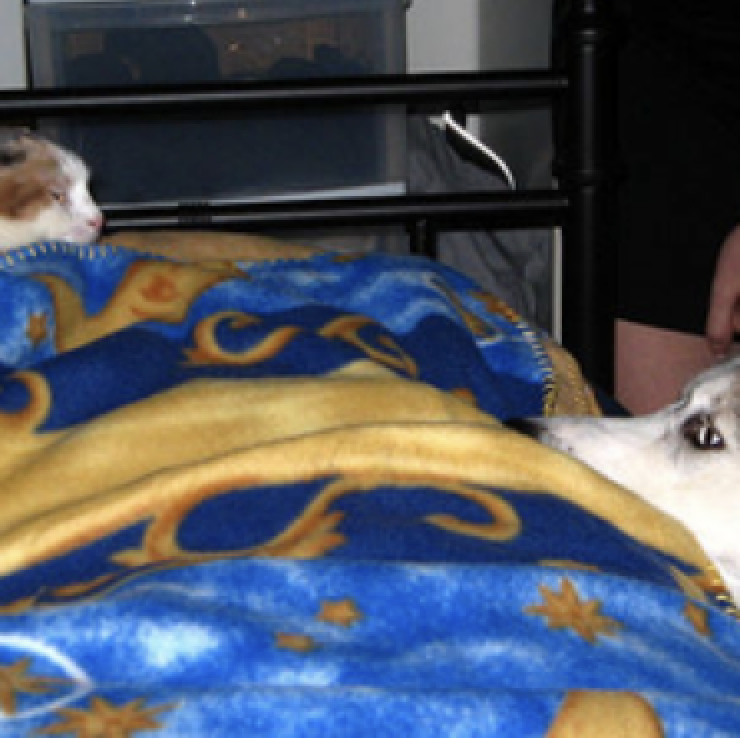}
\end{subfigure}\\
\begin{subfigure}[t]{0.16\textwidth}
\includegraphics[width=\textwidth,valign=T]{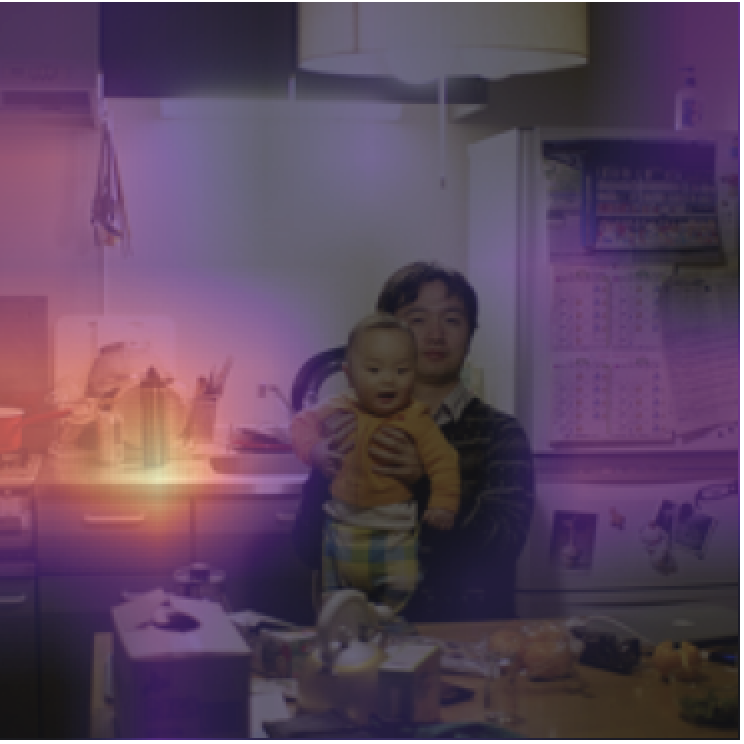}
\end{subfigure} &
\begin{subfigure}[t]{0.16\textwidth}
\includegraphics[width=\textwidth,valign=T]{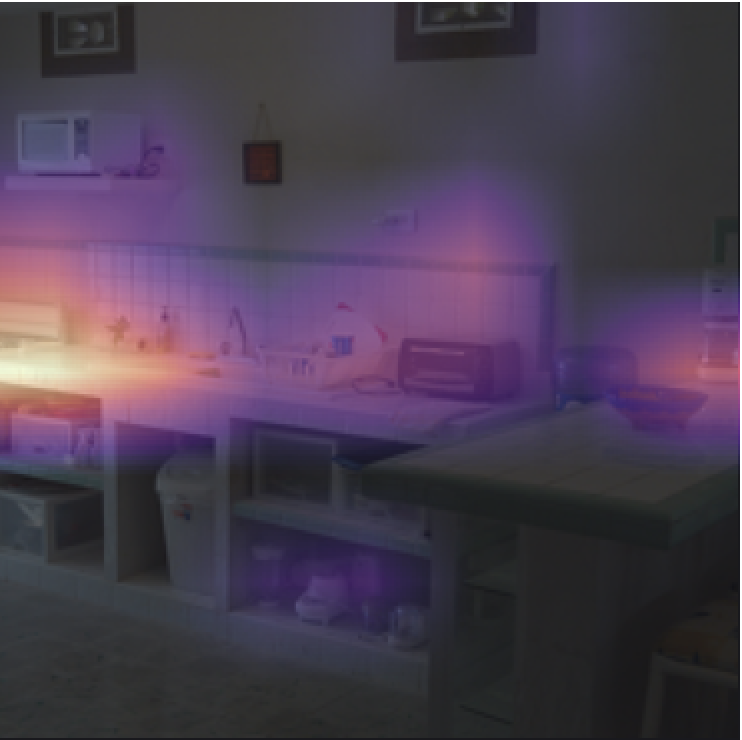}
\end{subfigure} &
\begin{subfigure}[t]{0.16\textwidth}
\includegraphics[width=\textwidth,valign=T]{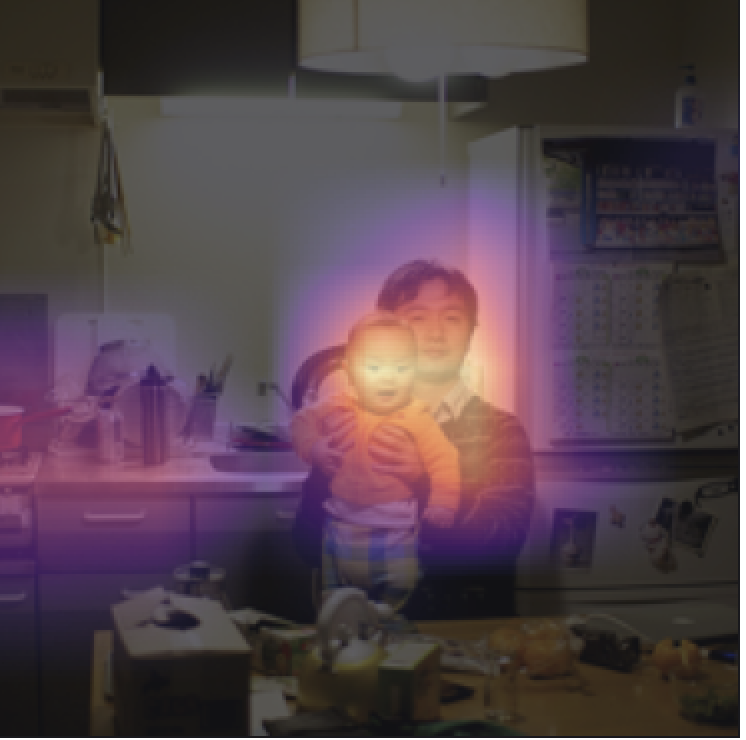}
\end{subfigure}&
\begin{subfigure}[t]{0.16\textwidth}
\includegraphics[width=\textwidth,valign=T]{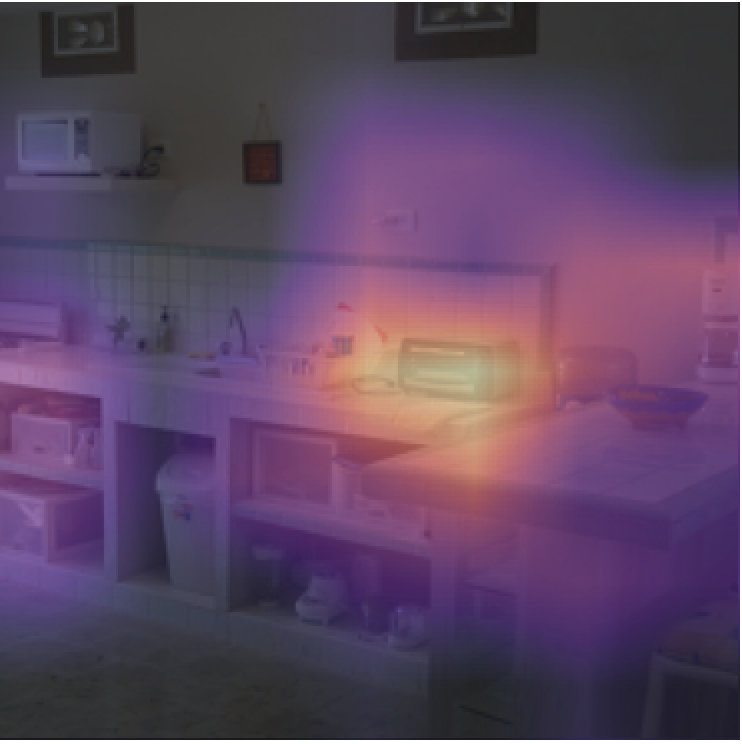}
\end{subfigure}&
\begin{subfigure}[t]{0.16\textwidth}
\includegraphics[width=\textwidth,valign=T]{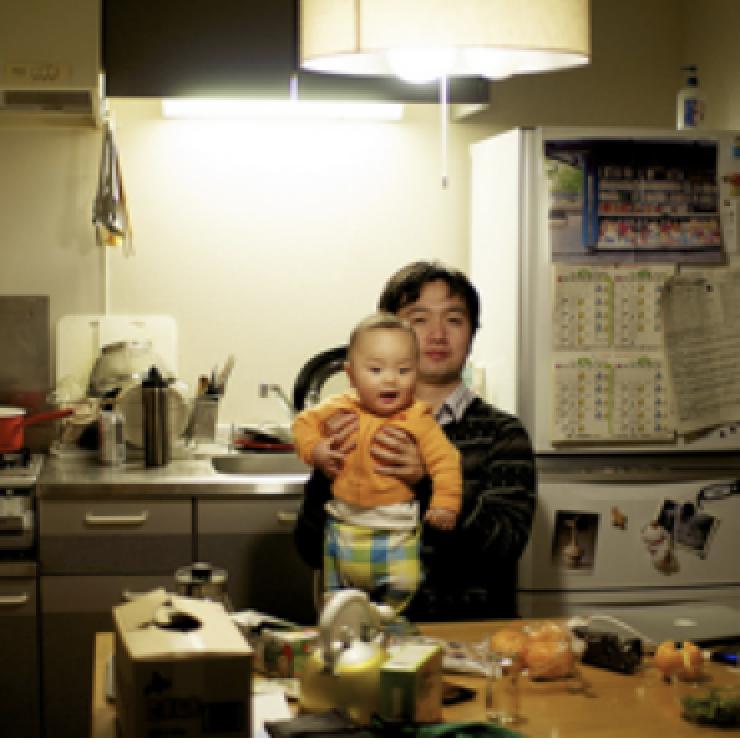}
\end{subfigure}&
\begin{subfigure}[t]{0.16\textwidth}
\includegraphics[width=\textwidth,valign=T]{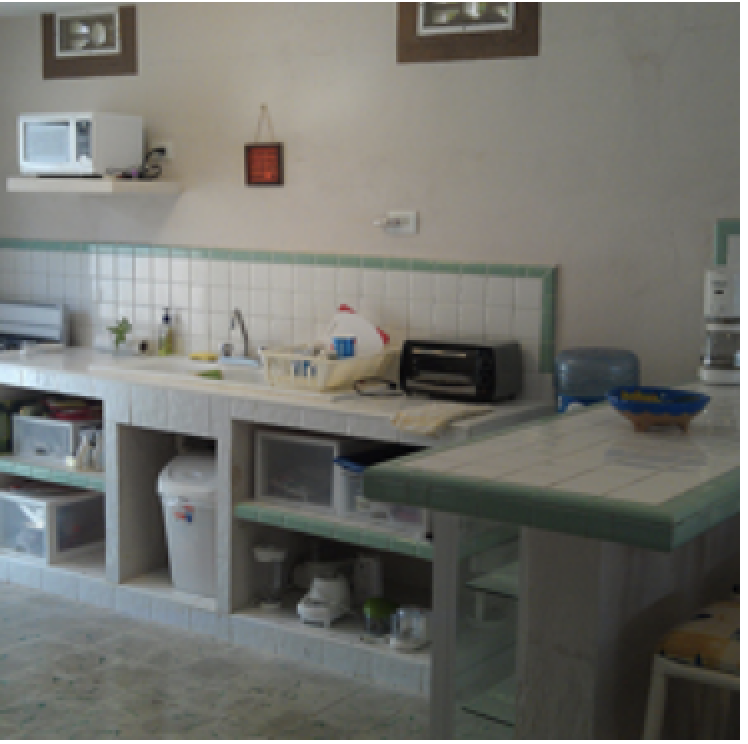}
\end{subfigure}\\
\begin{subfigure}[t]{0.16\textwidth}
\includegraphics[width=\textwidth,valign=T]{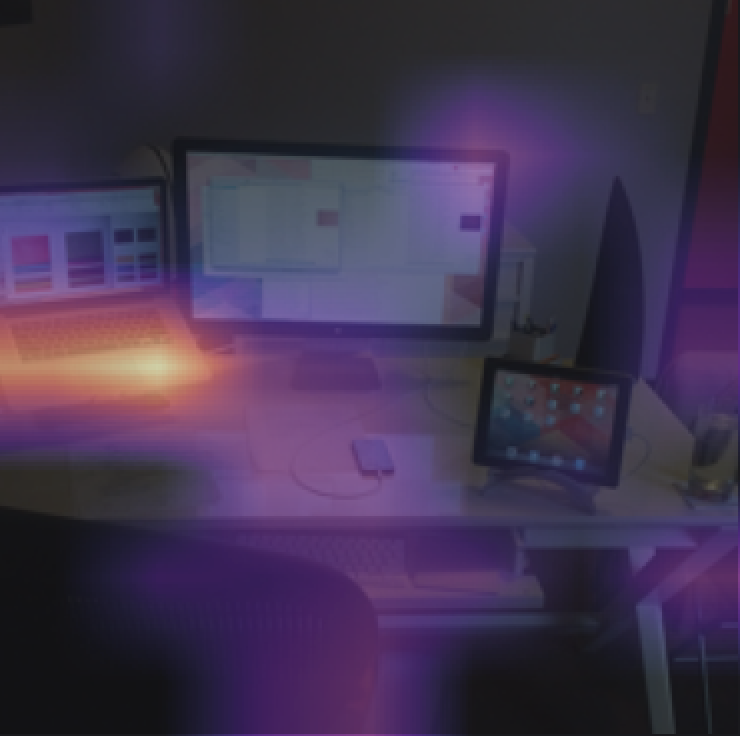}
\end{subfigure} &
\begin{subfigure}[t]{0.16\textwidth}
\includegraphics[width=\textwidth,valign=T]{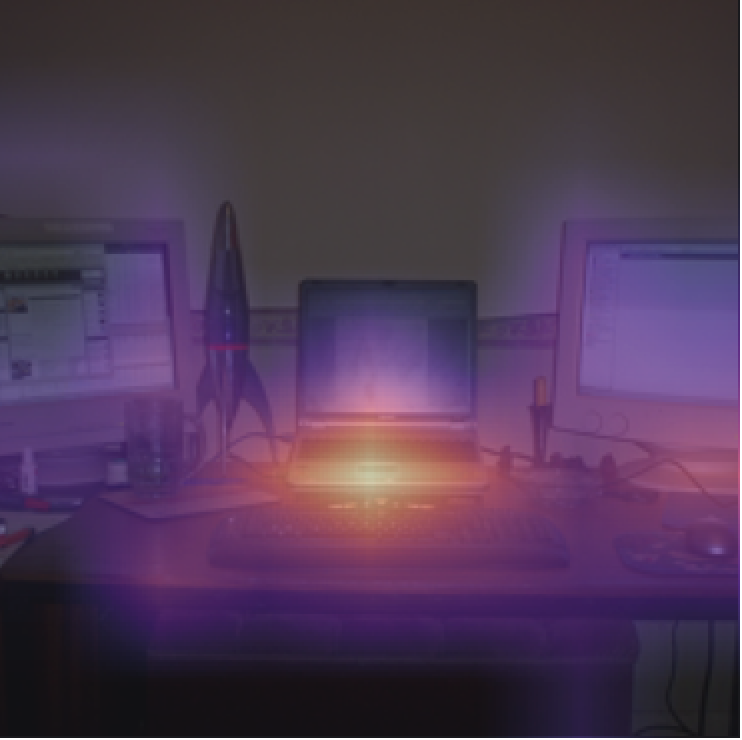}
\end{subfigure} &
\begin{subfigure}[t]{0.16\textwidth}
\includegraphics[width=\textwidth,valign=T]{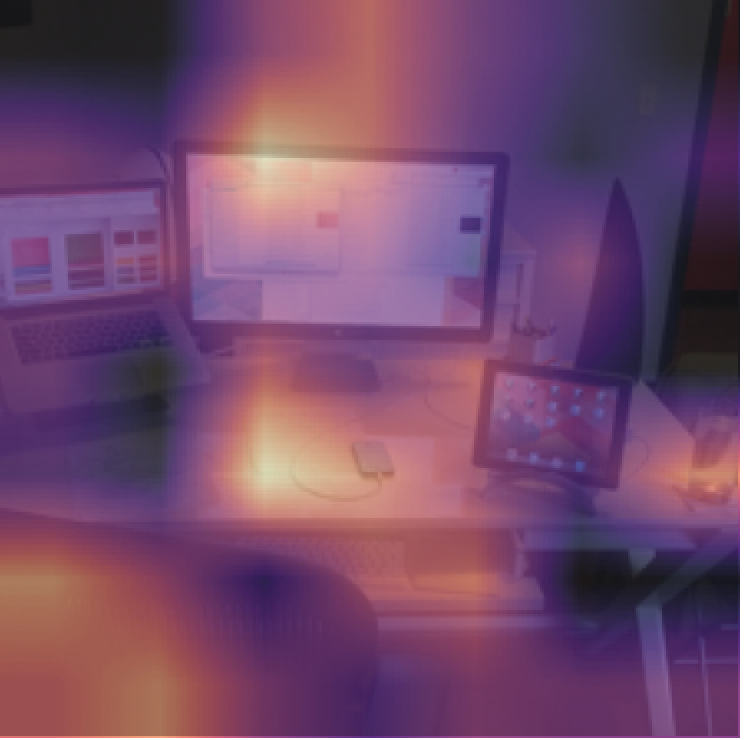}
\end{subfigure}&
\begin{subfigure}[t]{0.16\textwidth}
\includegraphics[width=\textwidth,valign=T]{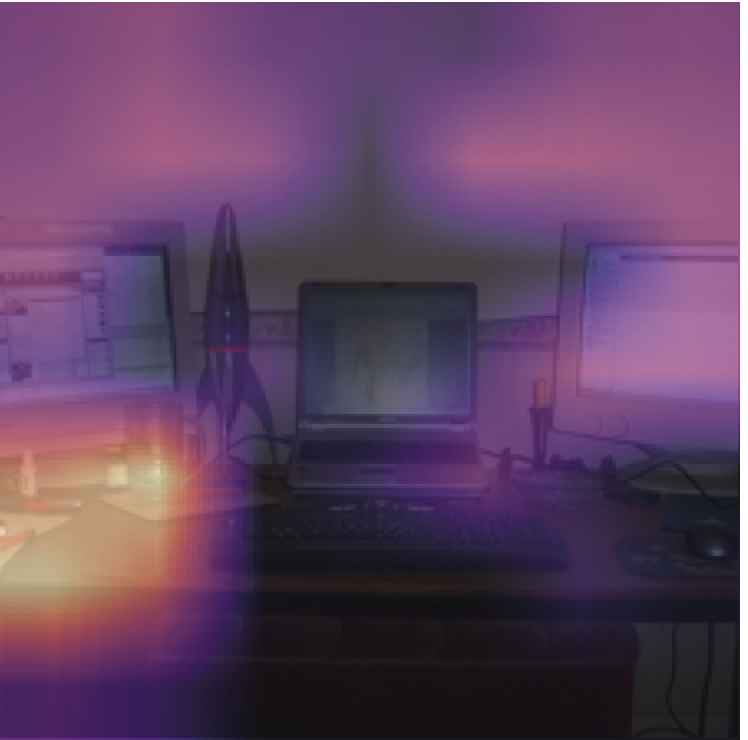}
\end{subfigure}&
\begin{subfigure}[t]{0.16\textwidth}
\includegraphics[width=\textwidth,valign=T]{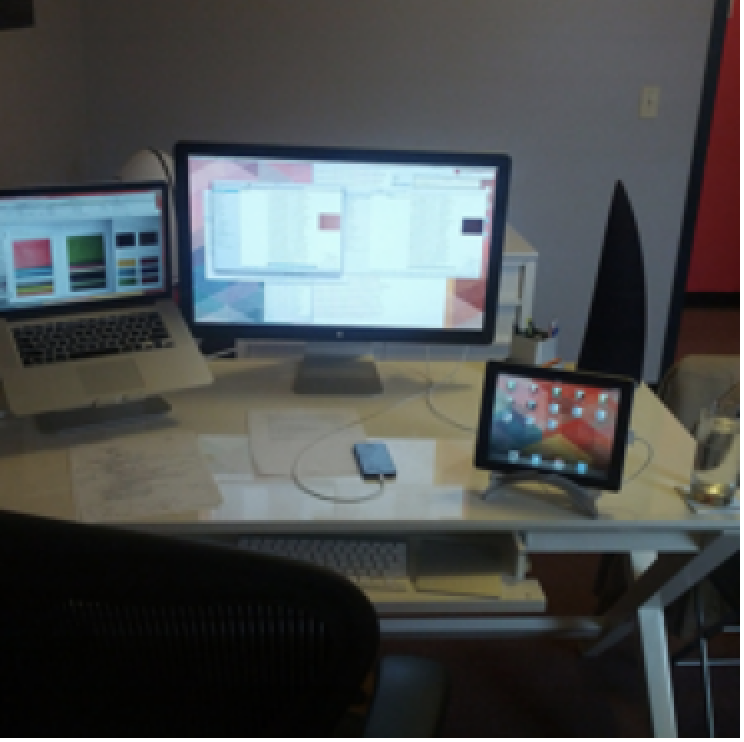}
\end{subfigure}&
\begin{subfigure}[t]{0.16\textwidth}
\includegraphics[width=\textwidth,valign=T]{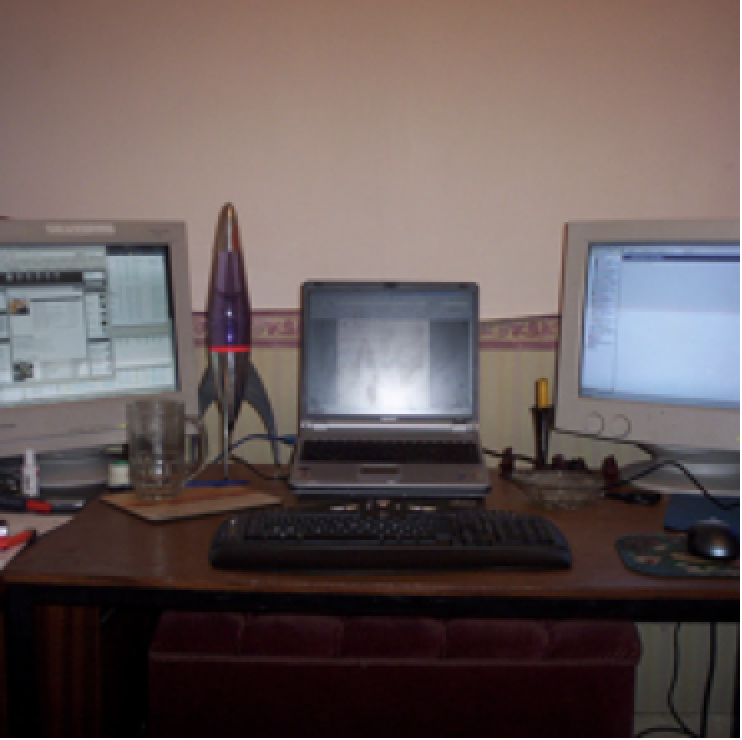}
\end{subfigure}\\
\multicolumn{2}{p{0.3\textwidth}@{}}{\subcaption{Common Structure}}&
\multicolumn{2}{p{0.3\textwidth}@{}}{\subcaption{Individual Structure}}&
\multicolumn{2}{p{0.3\textwidth}@{}}{\subcaption{Original Image}}\\
\end{tabular}

\caption{The results for three sample pairs of images are shown (with each row representing a pair). The Grad-CAM visualizations for the common and individual structures are shown in the first and the middle two columns, respectively. The original pairs of images are shown in the last two columns.}
\label{paired_coco_ims_append}
\end{figure*}

{\small
\bibliographystyle{ieee_fullname}
\bibliography{citation}
}

\end{document}